\newtheorem{theorem}{Theorem}
\newtheorem{lemma}[theorem]{Lemma}
\newtheorem{corollary}[theorem]{Corollary}
\newtheorem{definition}[theorem]{Definition}
\newcommand{\oea}{\mbox{$(1 + 1)$~EA}\xspace}
\newcommand{\ofea}{$(1 + 1)$~FEA$_{\beta}$\xspace}
\newcommand{\oplea}{\mbox{$(1+\lambda)$~EA}\xspace}
\newcommand{\mplea}{\mbox{$(\mu+\lambda)$~EA}\xspace}
\newcommand{\mclea}{\mbox{$(\mu,\lambda)$~EA}\xspace}
\newcommand{\ollga}{\mbox{$(1+(\lambda,\lambda))$~GA}\xspace}
\newcommand{\OM}{\textsc{OneMax}\xspace}
\newcommand{\onemax}{\OM}
\newcommand{\LO}{\textsc{Leading\-Ones}\xspace}
\newcommand{\leadingones}{\LO}
\newcommand{\jump}{\textsc{Jump}\xspace}
\DeclareMathOperator{\Sample}{Sample}
\DeclareMathOperator{\gap}{gap}
\DeclareMathOperator{\im}{Im}
\newcommand{\R}{\ensuremath{\mathbb{R}}}
\newcommand{\N}{\ensuremath{\mathbb{N}}} 
\newcommand{\eps}{\varepsilon} 
\newcommand{\new}{\textcolor{black}}
\newcommand{\merk}[1]{\textbf{\textcolor{red}{#1}}}
\newcommand{\assign}{\leftarrow}
{\sloppy
\title{An Extended Jump Functions Benchmark for the Analysis of Randomized Search Heuristics\thanks{Full version of the paper~\cite{BamburyBD21} \new{that appeared} in the proceedings of GECCO 2021}}

\author{Henry Bambury\thanks{\'Ecole Polytechnique, Institut Polytechnique de Paris, Palaiseau, France} \and Antoine Bultel\footnotemark[2] \and Benjamin Doerr\thanks{Laboratoire d'Informatique (LIX), CNRS, \'Ecole Polytechnique, Institut Polytechnique de Paris, Palaiseau, France}}

\date{\today}

\begin{document}

\maketitle

\begin{abstract}
Jump functions are the \new{most-studied} non-unimodal benchmark in the theory of randomized search heuristics, in particular, evolutionary algorithms (EAs). They have significantly improved our understanding of how EAs escape from local optima. However, their particular structure -- to leave the local optimum one can only jump directly to the global optimum -- raises the question of how representative such results are.

    For this reason, we propose an extended class $\textsc{Jump}_{k,\delta}$ of jump functions that contain a valley of low fitness of width $\delta$ starting at distance $k$ from the global optimum. We prove that several previous results extend to this more general class: for all \new{$k \le \frac{n^{1/3}}{\ln{n}}$} and $\delta < k$, the optimal mutation rate for the $(1+1)$~EA is $\frac{\delta}{n}$, and the fast $(1+1)$~EA runs faster than the classical $(1+1)$~EA by a factor super-exponential in $\delta$. However, we also observe that some known results do not generalize: the randomized local search algorithm with stagnation detection, which is faster than the fast $(1+1)$~EA by a factor polynomial in $k$ on $\textsc{Jump}_k$, is slower by a factor polynomial in $n$ on some $\textsc{Jump}_{k,\delta}$ instances.

    Computationally, the new class allows experiments with wider fitness valleys, especially when they lie further away from the global optimum.
\end{abstract}

{\sloppy
\section{Introduction}

The theory of randomized search heuristics, which is predominantly the theory of evolutionary algorithms, has made tremendous progress in the last thirty years. Starting with innocent-looking questions like how the $(1+1)$ evolutionary algorithm (\oea) optimizes the \onemax function \new{(that associates to any bitstring in $\{0,1\}^n$ the number of ones it contains)}, we are now able to analyze the performance of population-based algorithms, ant colony optimizers, and estimation-of-distribution algorithms on various combinatorial optimization problems, and this also in the presence of noisy, stochastic, or dynamically changing problem data. 

This progress was made possible by the performance analysis on simple benchmark problems such as \onemax, linear functions, monotonic functions, \leadingones, long paths functions, and jump functions, which allowed to rigorously and in isolation study how EAs cope with certain difficulties. It is safe to say that these benchmark problems are a cornerstone of the theory of EAs.

Regarding the theory of EAs so far (and we refer to Section~\ref{sec:previous} for a short account of the most relevant previous works), we note that our understanding of how unimodal functions are optimized is much more profound than our understanding of how EAs cope with local optima. This is unfortunate since it is known that getting stuck in local optima is one of the key problems in the use of EAs. This discrepancy is also visible from the set of classic benchmark problems, which contains many unimodal problems or problem classes, but much fewer multimodal ones. In fact, the vast majority of the mathematical runtime analyses of EAs on multimodal benchmark problems regard only the jump function class.

Jump functions are multimodal, but have quite a particular structure. The set of easy-to-reach local optima consists of all search points in Hamming distance $k$ from the global optimum. All search points closer to the optimum (but different from it) have a significantly worse fitness. Consequently, the only way to improve over the local optimum is to directly move to the global optimum. This particularity raises the question to what extent the existing results on jump functions generalize to other problems with local optima.

The particular structure of jump functions is also problematic from the view-point of experimental studies. Since many classic EAs need time at least $n^k$ to optimize a jump function with jump size $k$, experiments are possible only for moderate problem sizes $n$ and very small jump sizes $k$, e.g., $n \le 160$ and $k=4$ in the recent work~\cite{RajabiW20}. This makes it difficult to paint a general picture, and in particular, to estimate the influence of the jump size $k$ on the performance. 

To overcome these two shortcomings, we propose a natural extension of the jump function class. It has a second parameter $\delta$ allowing the valley of low fitness to have any width lower than $k$ (and not necessarily exactly~$k$). Hence the function $\jump_{k,\delta}$ agrees with the \onemax function except that the fitness is very low for all search points in Hamming distance $k-\delta+1, \dots, k-1$ from the optimum, creating a gap of size $\delta$ (hence $\jump_{k,k}$ is the classic jump function $\jump_k$). 

Since we cannot discuss how all previous works on jump functions extend to this new benchmark, we concentrate on the performance of the \oea with fixed mutation rate and two recently proposed variations, the \ofea and the random local search with robust stagnation detection algorithm~\cite{RajabiW21evocop}. Both were developed using insights from the classic \jump benchmark. 

\emph{Particular results:} For all \new{$k \le \frac{n^{1/3}}{\ln{n}}$} and all $\delta \in [2..k]$, we give a precise estimate (including the leading constant) of the runtime of the \oea for a broad range of fixed mutation rates $p$, see Lemma~\ref{gb3}. With some more arguments, this allows us to show that the unique asymptotically optimal mutation rate is $\delta/n$, and that already a small constant-factor deviation from this value (in either direction) leads to a runtime increase by a factor of $\exp(\Omega(\delta))$, see Theorem~\ref{optEA}. The runtime obtained with this optimal mutation rate is lower than the one stemming from the standard mutation rate $p=1/n$ by a factor of $\Omega((\delta/e)^{\delta})$. These runtime estimates also allow to prove that the fast \oea with power-law exponent $\beta$ yields the same runtime as the \oea with the optimal fixed mutation rate apart from an $O(\delta^{\beta-0.5})$ factor (Theorem~\ref{ht new upper bound}), which appears small compared to the gain over the standard mutation rate. These results perfectly extend the previous knowledge on classic jump functions.

We also conduct a runtime analysis of the random local search with robust stagnation detection algorithm (SD-RLS$^*$). We determine its runtime precisely apart from lower order terms for all $\delta > 2$ and $k\le n-\omega(\sqrt{n})$  (Theorem~\ref{runtime_sd_rls}). In particular, we show that the SD-RLS$^*$, which is faster than the fast \oea by a factor polynomial in $k$ on $\jump_k$, is slower by a factor polynomial in $n$ on some $\jump_{k,\delta}$ instances. All runtime results are summarized in Table~\ref{tab:rts}.

\begin{table}[!ht] 
\resizebox{\textwidth}{!} {%
\begin{TAB}(r,5pt,10pt)[5pt]{|c|c|c|}{|c|c|c|c|}
     \textbf{Algorithm} & \textbf{$\jump_{k}$} & \textbf{$\jump_{k,\delta}$ with \new{$k \le \frac{n^{1/3}}{\ln{n}}$}} \\
      \oea with optimal MR & $\Theta((\frac{k}{n})^{-k}(\frac{n}{n-k})^{n-k})$~\cite{DoerrLMN17} & $(1+o(1))(\frac{en}{\delta})^{\delta}\binom{k}{\delta}^{-1}$\new{ [Theorem~\ref{optEA}]} \\
    \ofea  & $O(C^{\beta}_{n/2}k^{\beta - 0.5} (\frac{k}{n})^{-k}(\frac{n}{n-k})^{n-k})$~\cite{DoerrLMN17}  & $O(C^{\beta}_{n/2}\delta^{\beta - 0.5} (\frac{en}{\delta})^{\delta}\binom{k}{\delta}^{-1})$ \new{ [Theorem~\ref{ht new upper bound}]}\\
    SD-RLS$^*$ & $\binom{n}{k}(1+O(\frac{k^2}{n-2k}\ln(n)))$~\cite{RajabiW21evocop} & $(1+o(1))\big(\ln(R)\sum_{i=1}^{\delta-1}\new{\sum_{j=0}^i}\binom{n}{j}+\binom{n}{\delta}\binom{k}{\delta}^{-1}\big)$ \new{ [Theorem~\ref{runtime_sd_rls}]}
\end{TAB}
}
\caption{Summary of the runtimes of the algorithms studied in this paper on classic and our jump functions}\label{tab:rts}
\end{table}

Our experimental work in Section~\ref{experiments} shows that these asymptotic runtime differences are visible already for moderate problem sizes.

Overall, we believe that these results demonstrate that the larger class of jump functions proposed in this work has the potential to give new and relevant insights on how EAs cope with local optima.

\section{State of the Art}\label{sec:previous}

To put our work into context, we now briefly describe the state of the art and the most relevant previous works. Started more than thirty years ago, the theory of evolutionary computation, in particular, the field of runtime analysis, has first strongly focused on unimodal problems. Regarding such easy problems is natural when starting a new field and despite the supposed ease of the problems, many deep and useful results have been obtained and many powerful analysis methods were developed. We refer to the textbooks~\cite{NeumannW10,AugerD11,Jansen13,DoerrN20} for more details.

While the field has not exclusively regarded unimodal problems, it cannot be overlooked that only a small minority of the results discuss problems with (non-trivial) local optima. Consequently, not too many multimodal benchmark problems have been proposed. Besides sporadic results on cliff, hurdle, trap, and valley functions or the TwoMax and DeceivingLeadingBlocks problems (see, e.g.,~\cite{Prugel04,JagerskupperS07,FriedrichOSW09,PaixaoHST17,LissovoiOW19,OlivetoPHST18,LehreN19foga,NguyenS20,CovantesS20,WangZD21,DoerrK21ecj}) or custom-tailored example functions designed to demonstrate a particular effect, the only widely used multimodal benchmark is the class of jump functions.

Jump functions were introduced in the seminal work~\cite{DrosteJW02}. The jump function with jump parameter (jump size) $k$ is the pseudo-Boolean function that agrees with the \onemax function except that the fitness of all search points in Hamming distance $1$ to $k-1$ from the optimum is low and deceiving, that is, increasing with increasing distance from the optimum. Consequently, it comes as no surprise that simple elitist mutation-based EAs suffer from this valley of low fitness. They easily reach the local optimum (consisting of all search points in Hamming distance exactly $k$ from the optimum), but then have no other way to make progress than by directly generating the global optimum. When using standard bit mutation with the classic mutation rate, this takes an expected time of $n^k (1-1/n)^{-(n-k)} \ge n^k$. Consequently, as proven in~\cite{DrosteJW02}, the expected runtime of the \oea is $\Theta(n^k)$ when $k \ge 2$ (for $k=1$, the jump function equals \onemax and thus is unimodal). For reasonable ranges of the parameters, this bound can easily be extended to the \mplea~\cite{Doerr20gecco}. Interestingly, as also shown in~\cite{Doerr20gecco}, comma selection does not help here: for large ranges of the parameters, the runtime of the \mclea is the same (apart from lower order terms) as the one of the \mplea. This result improves over the much earlier $\exp(\Omega(k))$ lower bound of~\cite[Theorem~5]{Lehre10}.

For the \oea, larger mutation rates can give a significant speed-up, which is by a factor of order $(1-o(1))(k/e)^{k}$ for $k = o(\sqrt n)$ and the asymptotically optimal mutation rate $p = k/n$. A heavy-tailed mutation operator using a random mutation rate sampled from a power-law distribution with exponent $\beta>1$ (see~\cite{DoerrDK18,DoerrDK19} for earlier uses of random mutation rates) obtains a slightly smaller speed-up of $\Omega(k^{-\beta+0.5} (k/e)^k)$, but does so without having to know the jump size $k$~\cite{DoerrLMN17}. In~\cite{RajabiW20,RajabiW21evocop,RajabiW21gecco,DoerrR22}, stagnation-detection mechanisms were investigated which obtain a speed-up of $O((k/e)^k)$, hence saving the $k^{-\beta+0.5}$ factor loss of~\cite{DoerrLMN17}, also without having to know the jump size $k$. These works are good examples showing how a mathematical runtime analysis can help to improve existing algorithms. We note that the idea to choose parameters randomly from a heavy-tailed distribution has found a decent number of applications in discrete evolutionary optimization, e.g.,~\cite{MironovichB17,FriedrichQW18,FriedrichGQW18,QuinzanGWF21,WuQT18,AntipovBD20gecco,AntipovBD20ppsn,AntipovD20ppsn,AntipovBD21gecco,DoerrZ21aaai}.

Jump functions are also the first example where the usefulness of crossover could be proven, much earlier than for combinatorial problems~\cite{FischerW04,Sudholt05,LehreY11,DoerrHK12,DoerrJKNT13} or the OneMax benchmark~\cite{DoerrJKLWW11,DoerrDE15,Sudholt17,CorusO18tec,CorusO20}. The first such work~\cite{JansenW02}, among other results, showed that a simple $(\mu+1)$ genetic algorithm using uniform crossover with rate $p_c = O(\frac{1}{kn})$ has an $O(\mu n^2 k^3 + 2^{2k} p_c^{-1})$ runtime when the population size is at least $\mu = \Omega(k \log n)$. A shortcoming of this result, noted by the authors already, is that it only applies to uncommonly small crossover rates. Exploiting also a positive effect of the mutation operation applied to the crossover result, a runtime of $O(n^{k-1} \log n)$ was shown for natural algorithm parameters by Dang et al.~\cite[Theorem~2]{DangFKKLOSS18}. For $k \ge 3$, the logarithmic factor in the runtime can be removed by using a higher mutation rate. With additional diversity mechanisms, the runtime can be further lowered to $O(n \log n + 4^k)$, see~\cite{DangFKKLOSS16}. The \ollga with optimal parameters optimizes $\jump_k$ in time $O(n^{(k+1)/2} k^{-\Omega(k)})$~\cite{AntipovDK20}, similar runtimes result from heavy-tailed parameter choices~\cite{AntipovD20ppsn,AntipovBD21gecco}.

With a three-parent majority vote crossover, among other results, a runtime of $O(n \log n)$ could be obtained via a suitable island model for all $k = O(n^{1/2 - \eps})$~\cite{FriedrichKKNNS16}. A different voting algorithm also giving an $O(n \log n)$ runtime was proposed in~\cite{RoweA19}. Via a hybrid genetic algorithm using as variation operators only local search and a deterministic voting crossover, an $O(n)$ runtime was shown in~\cite{WhitleyVHM18}. 

Outside the regime of classic evolutionary algorithms, the compact genetic algorithm, a simple estimation-of-distribution algorithm, has a runtime  of $O(n \log n + 2^{O(k)})$~\cite{HasenohrlS18,Doerr21cgajump}. The $2$-MMAS$_{ib}$ ant colony optimizer was recently shown to also have a runtime of $O(n \log n)$ when $k \le C \ln(n)$ for a sufficiently small constant~$C$~\cite{BenbakiBD21}. Runtimes of $O\left(n \binom{n}{k}\right)$ and $O\left(k \log(n) \binom{n}{k}\right)$ were given for the $(1+1)$~IA$^{\mathrm{hyp}}$ and the $(1+1)$ Fast-IA artificial immune systems, respectively~\cite{CorusOY17,CorusOY18fast}. In~\cite{LissovoiOW19}, the runtime of a hyper-heuristic switching between elitist and non-elitist selection was studied. The lower bound of order $\Omega(n \log n) + \exp(\Omega(k))$ and the upper bound of order $O(n^{2k-1}/k)$, however, are too far apart to indicate an advantage or a disadvantage over most classic algorithms. In that work, it is further stated that the Metropolis algorithm (using the 1-bit neighborhood) has an $\exp(\Omega(n))$ runtime on jump functions. 

Finally, we note that two variants of jump functions have been proposed, namely one where the global optimum can be any point $x^*$ with $\|x\|_1 > n-k$~\cite{Jansen15} and a multi-objective variant~\cite{DoerrZ21aaai}.

\section{Preliminaries}
\label{prelim}

\subsection{Definition of the $\jump_{k,\delta}$ Function}

The $\jump_k$ function, introduced by Droste, Jansen and Wegener in \cite{DrosteJW02}, is defined as 
\[
\jump_k(x) = \begin{cases} \|x\|_1 &\mbox{if } \|x\|_1\in[0..n-k]\cup\{n\},\\
- \|x\|_1 & \mbox{otherwise,}
\end{cases} 
\] 
where $\|x\|_1=\sum_{i=1}^n x_i$ is the number of $1$-bits in $x\in\{0,1\}^n$. See the graph in Figure~\ref{mn_profile} for an example. We note that the original definition in~\cite{DrosteJW02} uses different fitness values, but the same relative ranking of the search points. Consequently, all algorithms ignoring absolute fitness values (in particular, all algorithms discussed in this work) behave exactly identical on the two variants. Our definition has the small additional beauty that the jump functions agree with the \onemax function on the easy parts of the search space. 

\begin{figure}[!ht] 
\centering
\includegraphics[width=10cm]{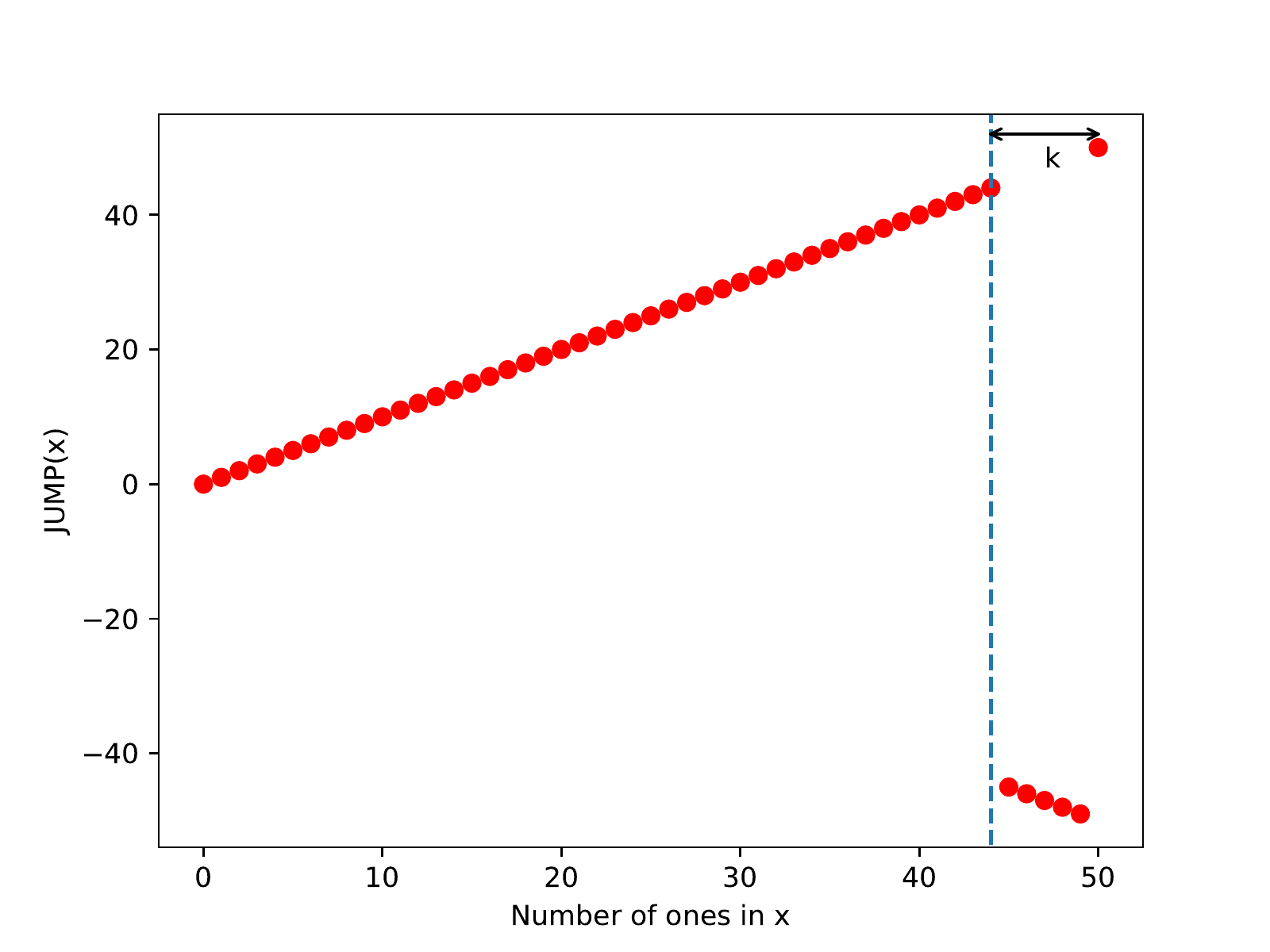}
\caption{Profile of the $\jump_6$ function.}
\label{mn_profile}
\end{figure}

The $\jump_k$ function presents several local optima (all points of fitness $n-k$). Therefore, the $\jump_k$ function allows one to analyze the ability of a given algorithm to leave a local optimum. 

However, when stuck \new{in} a local optimum, the only way to leave it is to flip all the bad bits at once, in a very unlikely \emph{perfect jump} that lands exactly on the global optimum. We speculate that this does not represent real-life problems on which evolutionary algorithms are to be applied. Indeed, in such problems local optima may exist, but usually they do not require perfection to be left.

To remedy this flaw, we propose a generalization of $\jump_k$, by defining for $\delta \in [1..k]$ the $\jump_{k,\delta}$ function via
\[
\jump_{k,\delta}(x) = \begin{cases} \|x\|_1 &\mbox{if } \|x\|_1\in[0..n-k]\cup[n-k+\delta..n],\\
- \|x\|_1 & \mbox{otherwise}
\end{cases} 
\]
for all $x\in \{0,1\}^n$.
\begin{figure}[!ht] 
\centering
\includegraphics[width=10cm]{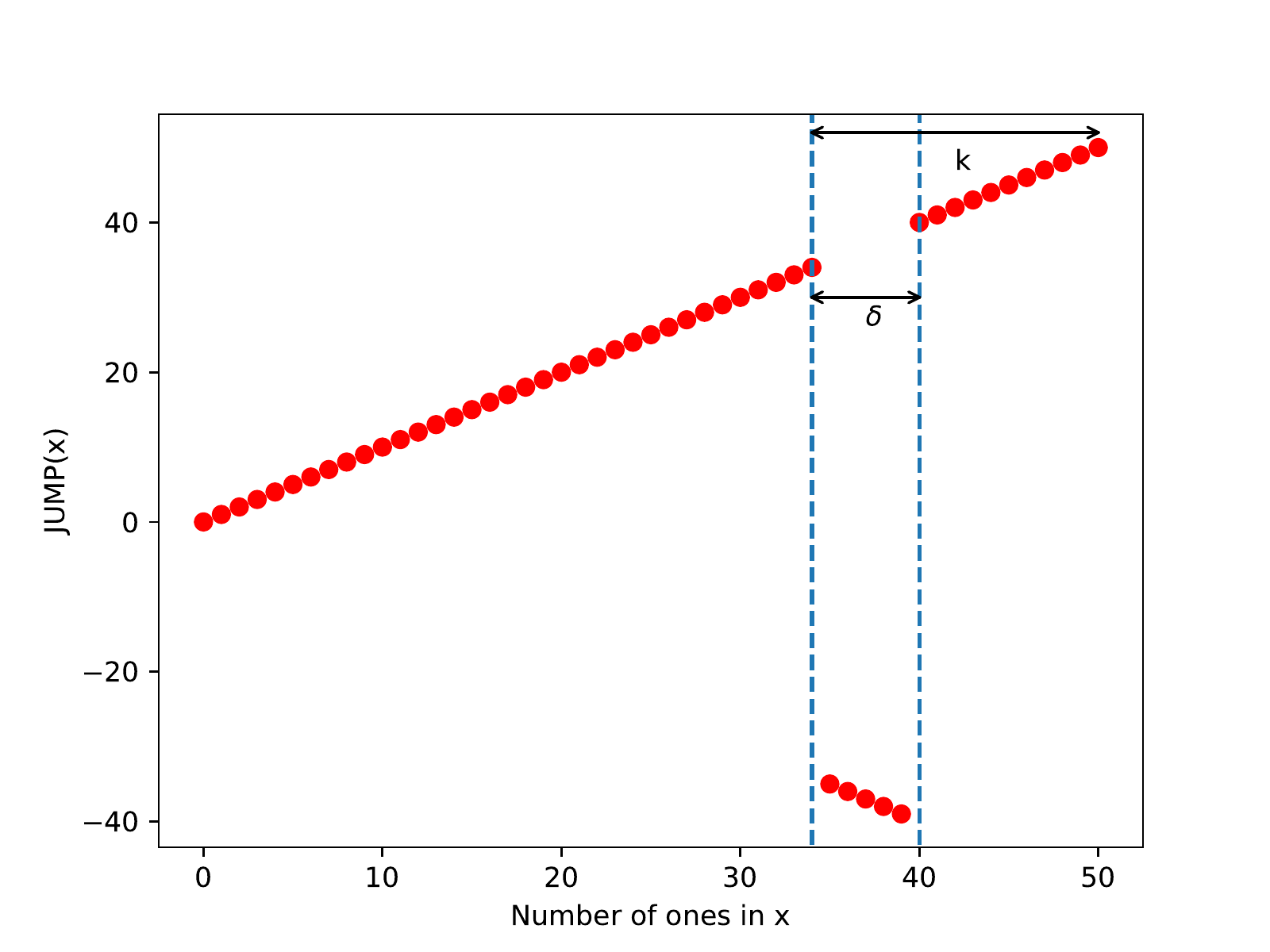}
\caption{Profile of the $\jump_{16,6}$ function.}
\label{kln_profile}
\end{figure}

The local optimum is still at Hamming distance $k$ from the global optimum, but the gap now has an arbitrary width $\delta\le k$. In particular, we observe the specific cases $\jump_{k,k}=\jump_k$ and $\new{\jump_{1,1}}=\onemax$. We also introduce the parameter $\ell:=k-\delta$. \new{This way, $n-\ell$ is the fitness of the closest individual to the local optimum that has better fitness}. \new{We note that in the classification of the block structure of a function of unitation from~\cite{LehreO18}, the function $\jump_{k,\delta}$ consists of a linear block of length~$n-k+1$, a block that for elitist algorithms starting below it is equivalent to a gap block of length~$\delta$, and another linear block of length~$k-\delta$.}

The main difference with $\jump_k$ is that with $\jump_{k,\delta}$, there are significantly more ways to jump over the valley from the local optima. This necessarily has consequences for the performance of evolutionary algorithms, as the most time-consuming phase of the optimization (jumping over the fitness valley) is now significantly easier. More precisely, from a local optimum the closest improving fitness layer contains not only $1$, but $\binom{k}{\delta}$ points with better fitness. Therefore, we intuitively expect evolutionary algorithms to be faster on $\jump_{k,\delta}$ by a factor of $\binom{k}{\delta}$. In the following sections, we will consider evolutionary algorithms whose performance of $\jump_k$ is known, and study their runtime on $\jump_{k,\delta}$, to see if they do benefit from this intuitive speedup. We note, however, that also often points above the fitness valley can be used to cross the valley, which hindered us from conducting a precise analysis for all parameter values.

\subsection{Stochastic Domination}

As we will see later on, the proposed generalization of the $\jump$ functions significantly complexifies the calculations. To ease reading, \new{whenever possible, we will rely on} the notion of stochastic domination to avoid \new{unnecessarily complicated  proofs}. This concept, introduced in probability theory, has proven very useful in the study of evolutionary algorithms~\cite{Doerr19tcs}. We gather in this subsection some useful results that will simplify the upcoming proofs. 

\begin{definition}
    Let $X$ and $Y$ be two real random variables (not necessarily defined on the same probability space). We say that $Y$ stochastically dominates $X$, denoted as $X\preceq Y$, if for all $\lambda\in\R$, $\Pr[X\geq\lambda]\leq\Pr[Y\geq\lambda]$.
\end{definition}

An elementary property of stochastic domination is the following.

\begin{lemma}\label{stochasticDomination}
    The following two conditions are equivalent.
    \begin{enumerate}
        \item $X\preceq Y$.
        \item For all monotonically non-decreasing function $f:\R\to\R$, we have $E[f(X)]\leq E[f(Y)]$.
    \end{enumerate}
\end{lemma}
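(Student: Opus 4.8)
The plan is to prove the two implications separately, with essentially all of the work in $(1) \Rightarrow (2)$. The direction $(2) \Rightarrow (1)$ is immediate: for a fixed $\lambda \in \R$ the indicator function $f = \mathds{1}_{[\lambda,\infty)}$ is monotonically non-decreasing, and since $E[\mathds{1}_{[\lambda,\infty)}(Z)] = \Pr[Z \geq \lambda]$ for any real random variable $Z$, applying hypothesis (2) to this particular $f$ yields exactly $\Pr[X \geq \lambda] \leq \Pr[Y \geq \lambda]$. As $\lambda$ was arbitrary, this is precisely the definition of $X \preceq Y$.

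For $(1) \Rightarrow (2)$ I would construct a monotone coupling. First I would reformulate the domination hypothesis in terms of cumulative distribution functions: writing $F_Z(t) = \Pr[Z \leq t]$, the family of inequalities $\Pr[X \geq \lambda] \leq \Pr[Y \geq \lambda]$ is equivalent, via right-continuity of CDFs, to $F_X(t) \geq F_Y(t)$ for all $t \in \R$. Next I would introduce the generalized inverses $F_Z^{-1}(u) = \inf\{ t \in \R : F_Z(t) \geq u\}$ for $u \in (0,1)$ and recall the standard facts that $u \mapsto F_Z^{-1}(u)$ is non-decreasing and that $F_Z^{-1}(U)$ has the same distribution as $Z$ when $U$ is uniform on $(0,1)$. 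The pointwise inequality $F_X \geq F_Y$ then transfers to the inverses in reversed order, $F_X^{-1}(u) \leq F_Y^{-1}(u)$ for every $u$.

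With this in hand I would fix a single $U \sim \mathrm{Unif}(0,1)$ and set $X' := F_X^{-1}(U)$ and $Y' := F_Y^{-1}(U)$ on this common probability space, so that $X' \sim X$, $Y' \sim Y$, and $X' \leq Y'$ almost surely. For any monotonically non-decreasing $f : \R \to \R$ this gives $f(X') \leq f(Y')$ almost surely, and taking expectations yields $E[f(X)] = E[f(X')] \leq E[f(Y')] = E[f(Y)]$, which is (2).

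The main obstacle is not the probabilistic idea but the integrability bookkeeping: a general non-decreasing $f$ need be neither bounded nor of one sign, so $E[f(X)]$ and $E[f(Y)]$ may fail to be finite. The coupling disposes of this cleanly, since the pointwise bound $f(X') \leq f(Y')$ forces $E[f(X')] \leq E[f(Y')]$ in the extended reals whenever both sides are defined; one only has to fix a convention for when an expectation exists (for instance, restricting to $f$ bounded below, which already covers every application in this paper, where the relevant $X$ and $Y$ are non-negative runtimes). As an alternative avoiding the inverse-CDF machinery, the conclusion for non-negative non-decreasing $f$ follows directly from the layer-cake identity $E[f(Z)] = \int_0^\infty \Pr[f(Z) > t]\,dt$ together with the observation that $\{x : f(x) > t\}$ is an upper half-line, so that each integrand satisfies $\Pr[f(X) > t] \leq \Pr[f(Y) > t]$ by hypothesis (1).
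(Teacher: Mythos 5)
Your proof is correct. Note that the paper itself gives no proof of this lemma at all: it is stated as ``an elementary property of stochastic domination'' and used as a known fact (with the general technique attributed to~\cite{Doerr19tcs}), so there is no argument in the paper to compare against. Your two-part argument is the standard one and is sound: the direction $(2)\Rightarrow(1)$ via the indicator of $[\lambda,\infty)$ is immediate, and for $(1)\Rightarrow(2)$ the quantile coupling $X'=F_X^{-1}(U)$, $Y'=F_Y^{-1}(U)$ correctly reduces everything to the pointwise inequality $f(X')\le f(Y')$; the passage from $\Pr[X\ge\lambda]\le\Pr[Y\ge\lambda]$ to $F_X\ge F_Y$ via right-continuity, and the reversal of the inequality under generalized inversion, are both handled properly. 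Your caveat about integrability is well taken and your proposed convention (expectations in the extended reals, or restriction to functions bounded below) covers every use of the lemma in the paper, where the random variables are hitting times and the functions applied to them are non-negative. The alternative layer-cake argument you sketch for non-negative $f$ is also valid, with the only subtlety being that the superlevel set $\{x: f(x)>t\}$ may be an open or closed half-line, which is absorbed by a one-sided limit of the hypothesis exactly as in your right-continuity step.
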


Stochastic domination allows to phrase and formulate the following useful version of the statement that better parent individuals have better offspring~\cite[Lemma~6.13]{Witt13}.

\begin{lemma}\label{witt}
    Let $x,x'\in\{0,1\}^n$ such that $\|x'\|_1<\|x\|_1$, and $p\in[0,\frac{1}{2}]$. Let $y$ (resp. $y'$) be the random point in $\{0,1\}^n$ obtained by flipping each bit of $x$ (resp. $x'$) with probability $p$.
    Then, $\|y'\|_1\preceq\|y\|_1$.
\end{lemma}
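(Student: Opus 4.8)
The plan is to prove the statement by exhibiting an explicit coupling of the two offspring under which the weight of $y'$ never exceeds the weight of $y$; pointwise domination on a common probability space then immediately yields the distributional statement $\|y'\|_1 \preceq \|y\|_1$.

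First I would reduce to a convenient choice of parents. Because each bit is flipped independently with the same probability $p$, the mutation operator is invariant under permutations of the coordinates, so the distribution of $\|y\|_1$ depends on $x$ only through $\|x\|_1$, and likewise for $y'$. Hence I may replace $x$ and $x'$ by any representatives of the same Hamming weights. Writing $a=\|x\|_1$ and $a'=\|x'\|_1<a$, I would take $x$ to be the string whose ones occupy positions $1,\dots,a$ and $x'$ the string whose ones occupy positions $1,\dots,a'$, so that $x'_i\le x_i$ for every $i$, with equality except on the block $B=\{a'+1,\dots,a\}$, where $x_i=1$ and $x'_i=0$.

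Next I would build the coupling. On the coordinates outside $B$, where $x$ and $x'$ agree, I would use the same flip decision for both strings, so these bits contribute identically to $\|y\|_1$ and to $\|y'\|_1$. For each coordinate $i\in B$ I would draw a single uniform variable $U_i$ on $[0,1]$ and set the offspring bit of $y$ to $1$ precisely when $U_i\le 1-p$, and the offspring bit of $y'$ to $1$ precisely when $U_i\le p$. This reproduces the correct marginals: a $1$-bit of $x$ survives with probability $1-p$, and a $0$-bit of $x'$ turns into a $1$ with probability $p$. The crucial point is that $p\le\tfrac12$ gives $p\le 1-p$, so whenever the $y'$-bit is set, the $y$-bit is set as well; thus $y_i\ge y'_i$ holds surely for every $i\in B$.

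Summing over all coordinates, in this coupled space $\|y\|_1-\|y'\|_1=\sum_{i\in B}(y_i-y'_i)\ge 0$ almost surely, i.e.\ $\|y'\|_1\le\|y\|_1$ pointwise. By the definition of stochastic domination (comparing $\Pr[\,\cdot\ge\lambda\,]$), this pointwise inequality gives $\|y'\|_1\preceq\|y\|_1$, as required. Alternatively one could establish only the single-bit case $a'=a-1$ and then chain the $a-a'$ resulting dominations using transitivity of $\preceq$. I expect the only delicate step to be the per-coordinate coupling on $B$: it is exactly here that the hypothesis $p\le\tfrac12$ enters, and without it the claimed monotonicity can fail.
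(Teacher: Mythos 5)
Your coupling argument is correct. Note that the paper itself gives no proof of this lemma: it is imported verbatim from Witt~\cite[Lemma~6.13]{Witt13}, so there is no in-paper argument to compare against. Your proof --- reduce by permutation invariance to nested parents, couple the agreeing coordinates identically, and on the block $B$ where they differ use a common uniform $U_i$ with thresholds $1-p$ for the $1$-bit of $x$ and $p$ for the $0$-bit of $x'$, so that $p\le 1-p$ forces $y_i\ge y_i'$ pointwise --- is the standard monotone-coupling proof of exactly this cited result, with the marginals checked correctly and the role of the hypothesis $p\le\tfrac12$ identified at the right place. It is a valid self-contained substitute for the citation.
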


\section{The \oea with Fixed Mutation Rate}
\label{fixed}
The so-called \oea is one of the simplest evolutionary algorithms. We recall its pseudocode in Algorithm \ref{alg:sdea}. The algorithm starts with a random individual $x\in\{0,1\}^n$, and generates an offspring $x'$ from $x$ by flipping each bit with probability $p$. The parameter $p$ is called the \textit{mutation rate}, and is fixed by the operator. If the offspring is not worse, i.e., $f(x')\ge f(x)$, the parent is replaced. If not, the offspring is discarded. The operation is repeated as long as desired.

\begin{algorithm2e}%
    \textbf{Initialization}\;
	$x\in \{0,1\}^n \assign \text{uniform at random}$\;
	\textbf{Optimization}\;
	\Repeat{Stopping condition}{
    $\Sample y\in\{0,1\}^n$ by flipping each bit in $x$ with probability $p$\;
    \If{$f(y)\ge f(x)$}{$ x \assign y$}
  }
\caption{\oea with static mutation rate $p$ maximizing a fitness function $f : \{0,1\}^n \to \R$}
\label{alg:sdea}
\end{algorithm2e}

A natural question when studying the \oea on a given fitness function is the determination of the optimal mutation rate. The asymptotically optimal mutation for the \oea on \onemax was shown to be~$\frac 1n$~\cite{GarnierKS99}. This result was extended to all pseudo-Boolean linear functions in~\cite{Witt13} and to the \oplea with $\lambda \le \ln n$~\cite{GiessenW17}. The optimal mutation rate of the \oea optimizing \leadingones is approximately $\frac{1.59}{n}$~\cite{BottcherDN10}, hence slightly larger than the often recommended choice~$\frac 1n$. In contrast to these results for unimodal functions, the optimal mutation rate for $\jump_k$ was shown to be $p=\frac{k}{n}$ (apart from lower-order terms); further, any deviation from this value results in an exponential (in $k$) increase of the runtime~\cite{DoerrLMN17}. It is not immediately obvious whether this generalizes to $\jump_{k,\delta}$; in this section we prove that it does under reasonable assumptions on $k,\delta,n$.

\subsection{General Upper and Lower Bounds on the Expected Runtime}

We denote by $T_p(k,\delta,n)$ the expected number of iterations of the algorithm until it evaluates the optimum. We first obtain general bounds on this expected value.
In the $\jump_{k,\delta}$ problem, as in the particular $\jump_k$ problem, the key difficulty is to leave the local optimum. To do so, the algorithm has to cross the fitness valley in one mutation step by flipping at least $\delta$ bits. The probability of this event will be crucial in our study.

\begin{definition}
    Let $i,j\in[0..n]$. We define $F_{i,j}(p)$ as the probability that, considering an individual $x$ satisfying $\|x\|_1=i$, its offspring $x'$ derived by standard bit mutation with mutation rate $p$ satisfies $\|x'\|_1\geq j$. 
    
    \new{For all $k,\delta,n \in \N$ such that $0<\delta \le k\leq n$, we will let $F(p)$ denote $F_{n-k,n-k+\delta}(p)$ to ease the reading.}
\end{definition}
The following is well-known and has been used numerous times in the theory of evolutionary algorithms. For reasons of completeness, we still decided to state the result and its proof.

\begin{lemma}\label{f(p)}
\new{For all $k,\delta,n \in \N$ such that $0<\delta \le k\leq n$, and denoting $\ell=k-\delta$, we have}
    \[
    F(p)=\sum_{j=0}^{\ell}\sum_{i=0}^{n-k}\binom{k}{\delta+i+j}\binom{n-k}{i}p^{\delta+2i+j}(1-p)^{n-\delta-2i-j}.
    \]
    Moreover, if $p\le \frac{1}{2}$, then for any $i\leq n-k$, $F_{i,n-\ell}(p)\leq F(p)$. 
\end{lemma}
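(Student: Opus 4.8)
The plan is to handle the two claims separately: the explicit formula by a direct combinatorial expansion of the mutation event, and the monotonicity inequality by invoking the domination result of Lemma~\ref{witt}.

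For the formula, I would start from a parent $x$ with $\|x\|_1 = n-k$, which has $n-k$ one-bits and $k$ zero-bits. Standard bit mutation acts independently on these two groups, so I would condition on the number $a$ of zero-bits that get flipped to ones and the number $b$ of one-bits that get flipped to zeros; these are independent with $a \sim \Bin(k,p)$ and $b \sim \Bin(n-k,p)$. Since the offspring satisfies $\|x'\|_1 = (n-k) + a - b$, the event $\|x'\|_1 \ge n-k+\delta$ is exactly $\{a - b \ge \delta\}$. Writing out the two binomial probabilities and summing over all pairs $(a,b)$ with $a-b\ge\delta$ expresses $F(p)$ as a double sum. Reindexing by $i := b$ (ranging over $[0..n-k]$) and $j := a - b - \delta \ge 0$, so that $a = \delta + i + j$, immediately produces the $p$-exponent $a+b = \delta+2i+j$, the complementary exponent $n-\delta-2i-j$, and the two binomial coefficients $\binom{k}{\delta+i+j}$ and $\binom{n-k}{i}$ of the claimed expression.

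The one point requiring attention is the range of $j$. The constraint $a \le k$ forces $j \le \ell - i$, whereas the stated sum runs up to $j = \ell$. I would resolve this by observing that for $j > \ell - i$ we have $\delta + i + j > k$, hence $\binom{k}{\delta+i+j} = 0$ and the corresponding terms vanish; extending the upper limit to $\ell$ therefore leaves the value unchanged. Together with the fact that a finite double sum may be reordered freely, this yields the formula exactly as written. For the second claim, I would simply specialize Lemma~\ref{witt}. Fix any $i \le n-k$ and take $x, x'$ with $\|x\|_1 = n-k$ and $\|x'\|_1 = i$. If $i < n-k$, then Lemma~\ref{witt} gives $\|y'\|_1 \preceq \|y\|_1$ for the respective offspring, and applying the definition of stochastic domination at the threshold $\lambda = n-\ell = n-k+\delta$ yields $F_{i,n-\ell}(p) = \Pr[\|y'\|_1 \ge n-\ell] \le \Pr[\|y\|_1 \ge n-\ell] = F(p)$; the case $i = n-k$ is trivial since then $F_{i,n-\ell}(p) = F(p)$.

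I expect no genuine obstacle here: the first part is careful bookkeeping whose only delicate step is justifying the extension of the summation range through vanishing binomial coefficients, and the second part is an immediate consequence of the already-established domination lemma once one identifies the correct threshold $n-\ell$.
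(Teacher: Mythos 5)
Your proposal is correct and follows essentially the same route as the paper: the paper likewise decomposes the event by the offspring's fitness level $n-\ell+j$ (which matches your reindexing $j = a-b-\delta$, $i=b$) and derives the second claim by the same specialization of Lemma~\ref{witt}. Your explicit remark that the terms with $j > \ell - i$ vanish via $\binom{k}{\delta+i+j}=0$ is a small point the paper leaves implicit, but the arguments are otherwise identical.
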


\begin{proof}
Let $A_i := \{x\in\{0,1\}^n : \|x\|_1 = i\}$ for all $i$ in $[0..n]$. Consider an iteration starting with $x\in A_{n-k}$.
Let $y\in\{0,1\}^n$ be the offspring generated from $x$.
For all $j\in[0..k-\delta]$, we compute
\begin{equation*}
\begin{split}
\Pr[y&\in A_{n-\ell+j}]=\sum_{i=0}^{n-k}\Pr[\emph{we flipped $i$ 1 bits and $\delta+i+j$ 0 bits of $x$}] \\
 & = \sum_{i=0}^{n-k}\binom{k}{\delta+i+j}p^{\delta+i+j}(1-p)^{k-\delta-i-j}\binom{n-k}{i}p^i(1-p)^{n-k-i}.
\end{split}
\end{equation*}
Since the sets $(A_i)_{i\in[n-\ell..n]}$ are disjoint, we have
\begin{equation*}
\begin{split}
F(p) & =\Pr[y\in A_{n-\ell}\cup...\cup A_n]\\
 & =\sum_{j=0}^{\ell}\Pr[y\in A_{n-\ell+j}] \\
 & = \sum_{j=0}^{\ell}\sum_{i=0}^{n-k}\binom{k}{\delta+i+j}\binom{n-k}{i}p^{\delta+2i+j}(1-p)^{n-\delta-2i-j}.
\end{split}
\end{equation*}
This proves the first part of the lemma. To prove the second part, we rely on Lemma~\ref{witt}. Let $x'$ be a point of fitness $i$ for some $i\le n-k$. Let $y'$ be the offspring generated from $x'$. Since $\|x'\|_1<\|x\|_1$, the lemma states that $\|y\|_1$ stochastically dominates $\|y'\|_1$. Therefore, by definition, \[F_{i,n-k+\delta}(p)=\Pr[\|y'\|_1\geq  n-k+\delta]\leq\Pr[\|y\|_1\geq n-k+\delta]=F(p),\] which proves our claim.
\end{proof}

We now estimate the expected number of iterations needed by the \oea to optimize the $\jump_{k,\delta}$ function. \new{The following result implies that, roughly speaking, the expected runtime is $\frac{1}{F(p)}$, that is, the expected time to leave the local optimum to a better solution. This will be made more precise in Section~\ref{sec:optrate}, where also estimates for $F(p)$ will be derived.}

\begin{theorem}\label{general bound}
    For all $k,\delta,n \in \N$ such that $0<\delta \le k\leq n$ and all $p\le\frac{1}{2}$, the expected optimization time of the \oea with fixed mutation rate $p$ on the $\jump_{k,\delta}$ problem satisfies \[\frac{1}{2^n}\sum_{i=0}^{n-k}\binom{n}{i}\frac{1}{F(p)}\le T_p(k,\delta,n) \le\frac{1}{F(p)}+2\frac{\ln(n)+1}{p(1-p)^{n-1}}.\]
\end{theorem}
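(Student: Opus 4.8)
The plan is to establish the two bounds separately, thinking of the optimization as a two-phase process: first reaching the local optimum (the plateau of fitness $n-k$), and then crossing the fitness valley by a single lucky mutation that jumps to fitness at least $n-k+\delta$. The crossing of the valley is essentially a sequence of Bernoulli trials with success probability $F(p)$ per iteration once the algorithm sits on the local optimum, so $\frac{1}{F(p)}$ is the dominant contribution, and the task is to control the cost of the initial phase and the overhead incurred by not always being exactly at the local optimum.

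For the \textbf{lower bound}, the key observation is that the algorithm can only reach the optimum by producing an offspring of fitness $\ge n-k+\delta$ from some parent of fitness $\le n-k$ (since the valley region $[n-k+1..n-k+\delta-1]$ is never accepted by an elitist algorithm once the current fitness is below it). By Lemma~\ref{f(p)}, for any parent $x$ with $\|x\|_1 = i \le n-k$ the probability of such a jump is $F_{i,n-k+\delta}(p) \le F(p)$. I would condition on the random initial search point: with probability $\frac{1}{2^n}\sum_{i=0}^{n-k}\binom{n}{i}$ the algorithm starts at a point of fitness at most $n-k$, i.e. below the valley, and from then on every single attempt to cross the valley succeeds with probability at most $F(p)$. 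Hence the number of iterations needed to cross is stochastically at least a geometric random variable with parameter $F(p)$, whose expectation is $\frac{1}{F(p)}$. Multiplying the success probability of starting below the valley by this expected waiting time gives the claimed lower bound $\frac{1}{2^n}\sum_{i=0}^{n-k}\binom{n}{i}\frac{1}{F(p)}$; I should be slightly careful to phrase this as a bound on the \emph{conditional} expectation and then combine, but since waiting-time contributions are nonnegative, dropping the event that we start above the valley only decreases the bound.

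For the \textbf{upper bound}, I would split the runtime into the time $T_1$ to first reach fitness $\ge n-k$ (either on the plateau or having already crossed) and the time $T_2$ spent afterwards crossing the valley. For $T_2$: whenever the current individual has fitness exactly $n-k$, the probability of jumping over the valley in one step is exactly $F(p)$, and by the domination part of Lemma~\ref{f(p)} any individual of lower fitness is at most as good, so the expected time to cross, once on the local optimum, is at most $\frac{1}{F(p)}$. For $T_1$, the algorithm behaves on the fitness-increasing part $[0..n-k]$ exactly as on \onemax restricted to that range: from a point of fitness $i < n-k$ it suffices to flip one of the $n-i$ zero-bits and no one-bit, an event of probability at least $(n-i)p(1-p)^{n-1}$. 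Summing the resulting geometric waiting times over $i$ from $0$ to $n-k-1$ gives an expected time at most $\frac{1}{p(1-p)^{n-1}}\sum_{i=0}^{n-k-1}\frac{1}{n-i} \le \frac{H_n}{p(1-p)^{n-1}} \le \frac{\ln(n)+1}{p(1-p)^{n-1}}$ via the harmonic-sum bound $H_n \le \ln(n)+1$; the factor $2$ in the theorem gives comfortable room for the crude passages.

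The main obstacle is making the phase decomposition rigorous when the algorithm does \emph{not} pass cleanly through the local optimum: standard bit mutation can jump directly from a point well below $n-k$ across the whole valley, or can overshoot the plateau, so the two phases are not cleanly separated events. The clean way to handle this is to bound the total runtime by the time of an idealized process that is forced to visit fitness $n-k$ and only then attempts the crossing; the domination statement in Lemma~\ref{f(p)} is exactly what licenses replacing ``the actual current point of fitness $\le n-k$'' by ``a point of fitness exactly $n-k$'' without increasing the crossing probability, which is the technical heart of the argument. I would invoke Lemma~\ref{witt} / Lemma~\ref{stochasticDomination} to couple the processes so that the expected crossing time is dominated by $\frac{1}{F(p)}$ regardless of the precise sub-$(n-k)$ trajectory, and then add the hill-climbing cost $T_1$ additively by linearity of expectation over the (at most two) phases.
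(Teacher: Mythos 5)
Your lower bound is exactly the paper's argument: condition on the initial search point having at most $n-k$ one-bits (probability $\frac{1}{2^n}\sum_{i=0}^{n-k}\binom{n}{i}$), note that an elitist algorithm started there never accepts a valley point, bound every crossing attempt by $F(p)$ via the domination part of Lemma~\ref{f(p)}, and conclude by comparison with a geometric distribution. No issues there.

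The upper bound has a concrete gap: your decomposition $T=T_1+T_2$, with $T_1$ the time to first reach fitness at least $n-k$ and $T_2$ the time to then cross the valley, does not account for the whole run. A successful jump lands at some fitness in $[n-k+\delta..n-1]$ in general, not at the optimum, so the algorithm must still climb the remaining $k-\delta$ \onemax-like levels above the valley; your accounting stops the moment the valley is crossed. In addition, the initial search point can lie \emph{inside} the valley (you only treat starts below it), where the fitness is $-\|x\|_1$ and an improving single-bit flip must hit one of the $i$ one-bits, giving probability $ip(1-p)^{n-1}$ rather than $(n-i)p(1-p)^{n-1}$. The paper sidesteps both issues by applying Wegener's fitness level theorem once over all $n$ levels, with $s_{n-k}=F(p)$, $s_i=(n-i)p(1-p)^{n-1}$ outside the valley (on both sides of it) and $s_i=ip(1-p)^{n-1}$ inside it; the three resulting harmonic sums together are at most $2\sum_{i=1}^n\frac1i$, which is precisely where the factor $2$ in the statement comes from -- it is not spare room for ``crude passages'' but is needed for the contributions you omitted. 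The fix is routine (add a third phase for the levels above the valley and a separate treatment of valley starts), and your worry about trajectories that skip the local optimum is a non-issue: in a fitness-level accounting, $1/s_i$ bounds the expected time spent in level $i$, and levels that are never visited contribute nothing, so no coupling via Lemma~\ref{witt} is required for the upper bound.
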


\begin{proof}

      Let $A_i := \{x\in\{0,1\}^n : \|x\|_1 = i\}$ for all $i$ in $[0..n]$. We call these subsets fitness levels, but note that they are not indexed in order of increasing fitness here.
      Let us start by proving the lower bound. With probability $\frac{1}{2^n}\sum_{i=0}^{n-k}\binom{n}{i}$, the initial individual of the \oea is in $A_0\cup...\cup A_{n-k}$.
      In this case, in each iteration until a fitness level of fitness not less than $n-k+\delta$ is reached, the algorithm has a positive probability of jumping over the valley. According to Lemma~\ref{f(p)}, this probability is at most $F(p)$ in each iteration. Therefore, the waiting time $W_J$ before reaching a fitness level of fitness greater than $n-k+\delta$ stochastically dominates a geometric distribution with success rate $F(p)$. Consequently, $E[W_J]\geq\frac{1}{F(p)},$ and thus 
      \[
      \frac{1}{2^n}\sum_{i=0}^{n-k}\binom{n}{i}\frac{1}{F(p)}\le T_p(k,\delta,n).
      \]
      In order to prove the upper bound, we rely on the fitness level theorem introduced by Wegener~\cite{Wegener01}. For $i\in[0..n-1]$, let
      \[
      s_i := \begin{cases} (n-i)p(1-p)^{n-1} &\mbox{if } i\in[0..n-k-1]\cup[n-k+\delta..n-1],\\
      F(p) & \mbox{if } i=n-k,\\
      ip(1-p)^{n-1} & \mbox{if } i\in[n-k+1..n-k+\delta-1].\\
      \end{cases} 
      \]
Then $s_i$ is a lower bound for the probability that an iteration starting in a point $x\in A_i$ ends with a point of strictly higher fitness. Thus, the fitness level theorem implies
\begin{equation*}
\begin{split}
 T_p(k,\delta,n) & \le \sum_{i=0}^{n-1} \frac{1}{s_i}\\
 & = \frac{1}{F(p)} + \sum_{i=0}^{n-k-1} \frac{1}{(n-i)p(1-p)^{n-1}} \\ & \quad +
 \sum_{i=n-k+1}^{n-k+\delta-1} \frac{1}{ip(1-p)^{n-1}} +\sum_{i=n-k+\delta}^{n-1} \frac{1}{(n-i)p(1-p)^{n-1}}\\
 & \le \frac{1}{F(p)} + \frac{2}{p(1-p)^{n-1}} \left(\sum_{i=1}^{n}\frac{1}{i} \right),
\end{split}
\end{equation*}
where we used that $\sum_{i=0}^{n-k-1} \frac{1}{(n-i)} +  \sum_{i=n-k+1}^{n-k+\delta-1} \frac{1}{i} + \sum_{i=n-k+\delta}^{n-1} \frac{1}{(n-i)} \le 2\sum_{i=1}^{n}\frac{1}{i}$.
With the estimate $\sum_{i=1}^n\frac{1}{i} \le \ln(n)+1$, we obtain the upper bound
\[
T_p(k,\delta,n) \le \frac{1}{F(p)} + 2\frac{\ln(n)+1}{p(1-p)^{n-1}}.\qedhere
\]
\end{proof}
We note that our lower bound only takes into account the time to cross the fitness valley. Using a fitness level theorem for lower bounds~\cite{Sudholt13,DoerrK21gecco}, one could also reflect the time spent in the easy parts of the jump function and improve the lower bound by a term similar to the $\Theta(\frac{\log n}{p(1-p)^{n-1}})$ term in the upper bound. Since the jump functions are interesting as a benchmark mostly because the crossing the fitness valley is difficult, that is, the runtime is the time to cross the valley plus lower order terms, we omit the details. We also note that only initial individuals below the gap region contribute to the lower bound. This still gives asymptotically tight bounds as long as $k \le \frac 12 n - \omega(\sqrt n)$, which is fully sufficient for our purposes. Nevertheless, we remark that the methods of Section~6 of the arXiv version of~\cite{DoerrK21gecco} would allow to show a $(1-o(1)) \frac{1}{F(p)}$ lower bound for larger ranges of parameters.

\subsection{Optimal Mutation Rate in the Standard Regime}\label{sec:optrate}

The estimates above show that the efficiency of the \oea on the $\jump_{k,\delta}$ function is strongly connected to the value $\frac{1}{F(p)}$. With the two parameters $k$ and $\delta$ possibly depending on $n$, an asymptotically precise analysis of $F(p)$ for the full parameter space appears difficult. For this reason, in most of the paper we limit ourselves to the case where \new{$k \le \frac{n^{1/3}}{\ln{n}}$} and $\delta \le k$ is arbitrary. We call this the \emph{standard regime}. For classic EAs on $\jump_k$, constant values of $k$ are already challenging and logarithmic values already lead to super-polynomial runtimes, so this regime is reasonable. \new{Note that in the standard regime we have $k=o(n^{1/3})$. This weaker condition will be sufficient in most proofs; the stronger constraint $k \le \frac{n^{1/3}}{\ln{n}}$ will only be needed in the proof of Lemma \ref{gb3}.}

\begin{lemma}\label{gb2}
    In the standard regime, if furthermore $p = o\left(\frac{1}{\sqrt{n\ell}}\right)$ with $\ell=k-\delta$ (or $p=o\left(\frac{1}{\sqrt{n}}\right)$ in the specific case $\ell=0$) we have \[F(p)= (1+o(1)) \binom{k}{\delta}p^{\delta}(1-p)^{n-\delta}.\] 
\end{lemma}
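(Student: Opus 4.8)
The plan is to work directly from the exact expansion of $F(p)$ given in Lemma~\ref{f(p)} and to show that the single term with $i=j=0$, namely $T_{0,0} := \binom{k}{\delta} p^\delta (1-p)^{n-\delta}$, dominates the whole double sum. Writing $T_{i,j} := \binom{k}{\delta+i+j}\binom{n-k}{i}p^{\delta+2i+j}(1-p)^{n-\delta-2i-j}$ for the general summand, it suffices to prove that $\sum_{(i,j)\neq(0,0)} T_{i,j} = o(1)\,T_{0,0}$, since then $F(p) = T_{0,0}\big(1 + o(1)\big)$ as claimed. The special case $\ell = 0$ is immediate: there the sum over $j$ reduces to $j=0$, and $\binom{k}{\delta+i} = \binom{k}{k+i}$ vanishes for $i \ge 1$, so $F(p) = T_{0,0}$ exactly.

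For $\ell \ge 1$, I would form the ratio, using $q := p/(1-p) \le 2p$ (valid since $p \le \tfrac12$),
\[
\frac{T_{i,j}}{T_{0,0}} = \frac{\binom{k}{\delta+i+j}}{\binom{k}{\delta}}\binom{n-k}{i}\,q^{2i+j}.
\]
The key observation is that the binomial ratio telescopes: $\binom{k}{\delta+i+j}/\binom{k}{\delta} = \prod_{s=1}^{i+j}\frac{\ell-s+1}{\delta+s} \le \ell^{i+j}$, which also holds trivially when $i+j > \ell$, where the left-hand side is $0$. Combined with $\binom{n-k}{i} \le n^i/i!$, this yields the multiplicative bound
\[
\frac{T_{i,j}}{T_{0,0}} \le \frac{(n\ell q^2)^i}{i!}\,(\ell q)^j.
\]

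Summing this bound over all $i,j \ge 0$ (which only enlarges it, since it drops the constraint $i+j\le\ell$) factors into an exponential series in $i$ and a geometric series in $j$:
\[
\sum_{(i,j)\neq(0,0)} \frac{T_{i,j}}{T_{0,0}} \le \exp\!\big(n\ell q^2\big)\cdot\frac{1}{1-\ell q} - 1.
\]
Now both correction factors are $1+o(1)$: the hypothesis $p = o(1/\sqrt{n\ell})$ gives $n\ell p^2 = o(1)$ and hence $n\ell q^2 = o(1)$, while $\ell q \le 2\ell p = o(\sqrt{\ell/n}) = o(1)$ because $\ell \le n$. Thus the right-hand side is $o(1)$, which completes the proof.

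I expect the main obstacle to be the bookkeeping needed to see that the double sum factorizes cleanly, and in particular recognizing that the hypothesis $p = o(1/\sqrt{n\ell})$ is tailored precisely to kill the $i=1$ (flip an extra $1$-bit) contributions through $n\ell p^2 = o(1)$, whereas the milder $\ell p = o(1)$ --- controlling the ``flip only extra $0$-bits'' terms indexed by $j$ --- comes essentially for free. Some care is also required to justify extending the index ranges in the upper bound and to track that $q = \Theta(p)$, so that the constants coming from $p \le \tfrac12$ do not interfere with the asymptotics.
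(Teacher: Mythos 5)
Your proof is correct and follows essentially the same route as the paper: both isolate the $(i,j)=(0,0)$ term, bound the binomial ratio $\binom{k}{\delta+i+j}/\binom{k}{\delta}$ by a power of $\ell$ and $\binom{n-k}{i}$ by $(n-k)^i/i!$, and use $n\ell p^2=o(1)$ and $\ell p=o(1)$ to show the remaining terms contribute a $1+o(1)$ factor. The only (immaterial) difference is how the error sum is closed: you factor it into $e^{n\ell q^2}/(1-\ell q)$, whereas the paper regroups by $s=i+j$ and applies the binomial theorem to obtain $\bigl(1+\tfrac{p}{1-p}+(n-k)\tfrac{p^2}{(1-p)^2}\bigr)^{\ell}$.
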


\begin{proof}

For $i \in [0..n-k]$ and $j\in [0..\ell]$, let 
\[\eps_{i,j} := \binom{k}{\delta+i+j}\binom{n-k}{i}p^{\delta+2i+j}(1-p)^{n-\delta-2i-j}.\]
Note that this describes the probability of gaining $j+\delta$ good bits by flipping $i$ \new{good} bits and $i+j+\delta$ \new{bad} bits in a string with exactly $n-k$ good bits. Recall that, by Lemma~\ref{f(p)}, $F(p)=\sum_{j=0}^\ell\sum_{i=0}^{n-k}\eps_{i,j}.$
We observe that $\eps_{i,j}=0$ for $i+j>\ell$. By reorganizing the terms in $F(p)$, noting that $\eps_{i,j} = \eps_{0,i+j}\binom{n-k}{i}\left(\frac{p}{1-p}\right)^i$, and using the binomial theorem, we compute

 \begin{equation*}
 \begin{split}
 F(p) & = \sum_{j=0}^{\ell} \sum_{i=0}^{n-k} \eps_{i,j}  = \sum_{s=0}^{\ell} \sum_{i=0}^s \eps_{0,s} \binom{n-k}{i}\left(\frac{p}{1-p}\right)^i\\
 & \le \sum_{s=0}^{\ell}\eps_{0,s}\sum_{i=0}^s \binom{s}{i}(n-k)^i\left(\frac{p}{1-p}\right)^i\\
 & = \sum_{s=0}^{\ell} \eps_{0,s} \left(1+(n-k)\frac{p}{1-p}\right)^s,
 \end{split}
\end{equation*}
where we used that $\binom{n-k}{i}\leq\frac{(n-k)^i}{i!}\leq\binom{s}{i}(n-k)^i.$
With $\frac{\eps_{0,s}}{\eps_{0,0}}= \left(\frac{p}{1-p}\right)^s\frac{\binom{k}{\delta+s}}{\binom{k}{\delta}}$ and  $\frac{\binom{k}{\delta+s}}{\binom{k}{\delta}}=\frac{\binom{\ell}{s}}{\binom{\delta+s}{s}} \le \binom{\ell}{s}$, we further estimate

\begin{equation*}
 \begin{split}
 \frac{F(p)}{\eps_{0,0}} & \le \sum_{s=0}^{\ell} \left(\frac{p}{1-p}\right)^s\frac{\binom{k}{\delta+s}}{\binom{k}{\delta}} \left(1+(n-k)\frac{p}{1-p}\right)^s\\
 & \le \sum_{s=0}^{\ell} \binom{\ell}{s} \left(\frac{p}{1-p}+(n-k)\frac{p^2}{(1-p)^2}\right)^s\\
 & = \left(1+\frac{p}{1-p}+(n-k)\frac{p^2}{(1-p)^2}\right)^{\ell}.
 \end{split}
\end{equation*}

Together with the trivial lower bound $\eps_{0,0}\le F(p)$, we thus have

\begin{equation*}
 \eps_{0,0} \le F(p) \le \eps_{0,0} \left(1+\frac{p}{1-p}+(n-k)\frac{p^2}{(1-p)^2}\right)^{\ell}. 
\end{equation*}

In the standard regime, and supposing $p=o\left(\frac{1}{\sqrt{n\ell}}\right)$, the right hand side is $\eps_{0,0}(1+o(1))$, proving our claim.
\end{proof}
\begin{lemma}\label{gb3}
    In the standard regime, if furthermore $\delta\geq 2$ and $p = o\left(\frac{1}{\sqrt{n\ell}}\right)$ (or $p=o\left(\frac{1}{\sqrt{n}}\right)$ in the specific case $\ell=0$), we have \[T_p(k,\delta,n)= (1\pm o(1))\frac{1}{\binom{k}{\delta}p^{\delta}(1-p)^{n-\delta}}.\]
\end{lemma}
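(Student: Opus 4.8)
The plan is to sandwich $T_p(k,\delta,n)$ between the two bounds of Theorem~\ref{general bound} and to show that both collapse to $(1\pm o(1))\frac{1}{\binom{k}{\delta}p^{\delta}(1-p)^{n-\delta}}$ after substituting the estimate $F(p)=(1+o(1))\binom{k}{\delta}p^{\delta}(1-p)^{n-\delta}$ from Lemma~\ref{gb2}, whose hypotheses are exactly those assumed here.

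For the lower bound, note that the prefactor $\frac{1}{2^n}\sum_{i=0}^{n-k}\binom{n}{i}$ equals $\Pr[\Bin(n,\tfrac12)\le n-k]$. Since $k=o(n^{1/3})=o(n)$, the threshold $n-k$ exceeds the mean $\tfrac n2$ by $\tfrac n2-k=\Theta(n)=\omega(\sqrt n)$, so a Chernoff (or Hoeffding) bound gives $\Pr[\Bin(n,\tfrac12)>n-k]=\exp(-\Omega(n))=o(1)$. Hence the prefactor is $1-o(1)$, and combining the lower bound of Theorem~\ref{general bound} with Lemma~\ref{gb2} yields $T_p\ge (1-o(1))\frac{1}{\binom{k}{\delta}p^{\delta}(1-p)^{n-\delta}}$.

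The crux is the upper bound: I must show the additive term $2\frac{\ln(n)+1}{p(1-p)^{n-1}}$ is negligible compared with $\frac{1}{F(p)}$. Taking the ratio of the two and inserting Lemma~\ref{gb2} for $F(p)$ reduces this to showing $(\ln n)\,\binom{k}{\delta}\,p^{\delta-1}(1-p)^{1-\delta}=o(1)$. The factor $(1-p)^{1-\delta}$ is harmless, since $p(\delta-1)\le pk=o(1)$ forces $(1-p)^{1-\delta}=1+o(1)$, so it remains to prove $(\ln n)\binom{k}{\delta}p^{\delta-1}=o(1)$. This is exactly the step where the hypotheses $\delta\ge2$ and the sharper regime bound $k\le n^{1/3}/\ln n$ (rather than only $k=o(n^{1/3})$) are used, and it is the main obstacle: one must balance the combinatorial factor $\binom{k}{\delta}$ against the single surplus power $p^{\delta-1}$, which forces one to exploit both the $\delta$-dependence of the mutation-rate constraint and the extra $\ln n$ saving in the regime bound. (The case $\ell=0$, i.e.\ $\delta=k$, is trivial: then $\binom{k}{\delta}p^{\delta-1}=p^{\delta-1}\le p=o(1/\sqrt n)$.)

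I would treat this estimate by cases. For $\delta=2$ I argue directly: $\binom{k}{2}p\le\tfrac12 k^2 p$, and since $\ell=k-2$ and $p=o(1/\sqrt{n\ell})$ one gets $(\ln n)k^2 p=o\!\big(k^2\ln n/\sqrt{n\ell}\big)$; for growing $k$ one has $\ell=\Theta(k)$, so this is $o\!\big(k^{3/2}\ln n/\sqrt n\big)=o\!\big((\ln n)^{-1/2}\big)$ after inserting $k^{3/2}\le n^{1/2}(\ln n)^{-3/2}$, while for bounded $k$ it is $o(\ln n/\sqrt n)$; either way it is $o(1)$. For $\delta\ge3$ I use $\binom{k}{\delta}\le(ek/\delta)^{\delta}$ and take logarithms: with $\ln p\le-\tfrac12\ln n$ (from $p=o(1/\sqrt n)$) and $\ln k\le\tfrac13\ln n-\ln\ln n$ (from $k\le n^{1/3}/\ln n$), collecting terms gives, for large $n$,
\[
\ln\!\Big[(\ln n)\,\tbinom{k}{\delta}p^{\delta-1}\Big]\le \delta(1-\ln\delta)-(\delta-1)\ln\ln n+\tfrac{3-\delta}{6}\ln n,
\]
which tends to $-\infty$ for every fixed $\delta\ge3$ (for $\delta=3$ via the $-(\delta-1)\ln\ln n$ term, whose presence requires precisely the $\ln n$ in the denominator of $k\le n^{1/3}/\ln n$; for $\delta\ge4$ already via $\tfrac{3-\delta}{6}\ln n$). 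Thus the additive term is $o(1/F(p))$, so the upper bound of Theorem~\ref{general bound} reads $T_p\le (1+o(1))\frac{1}{F(p)}=(1+o(1))\frac{1}{\binom{k}{\delta}p^{\delta}(1-p)^{n-\delta}}$, matching the lower bound and proving the claim; the genuinely tight cases are the small ones, $\delta\in\{2,3\}$.
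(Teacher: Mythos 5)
Your proof is correct and follows essentially the same route as the paper's: sandwich $T_p$ via Theorem~\ref{general bound}, substitute Lemma~\ref{gb2}, dispose of the prefactor by a Chernoff bound, and show the additive $\Theta(\ln n/(p(1-p)^{n-1}))$ term is negligible by splitting into $\ell=0$, $\delta=2$ (where $\ell=\Theta(k)$ is exploited exactly as in the paper), and $\delta\ge3$ (where the paper uses $\binom{k}{\delta}\le k^{\delta}$ and direct power counting rather than your logarithmic bookkeeping, but the computation is the same). One cosmetic remark: your phrase ``for every fixed $\delta\ge3$'' undersells your own displayed bound, which already tends to $-\infty$ uniformly over all admissible $\delta\ge3$, including $\delta$ growing with $n$.
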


\begin{proof}
We recall from the previous section that
\[
\frac{1}{2^n}\sum_{i=0}^{n-k}\binom{n}{i}\frac{1}{F(p)}\le T_p(k,\delta,n) \le\frac{1}{F(p)}+2\frac{\ln(n)+1}{p(1-p)^{n-1}}.
\]
We first compare the two terms of the upper bound. Using Lemma~\ref{gb2}, their ratio is $2F(p)\frac{\ln(n)+1}{p(1-p)^{n-1}}=(1+o(1))2\binom{k}{\delta}p^{\delta-1}(1-p)^{1-\delta}(\ln(n)+1)$. If $k=\delta$, we can already see that this \new{is} smaller than $O(n^{-\delta/2+1/2}\ln(n))=o(1)$. In the remainder we suppose $k>\delta$. Using $\binom{k}{\delta}\leq k^{\delta}$ as well as the assumptions that $p = o\left(\frac{1}{\sqrt{n\ell}}\right)\leq\frac{1}{\sqrt{n}}$ and \new{$k \le \frac{n^{1/3}}{\ln(n)}$}, we estimate
\begin{equation*}
     \begin{split}
      F(p)p^{-1}(1-p)^{1-n} & =(1+o(1))\binom{k}{\delta}\left(\frac{p}{1-p}\right)^{\delta-1}\\
     & \leq (1+o(1))k^{\delta}p^{\delta-1}\\
     & = \new{o\left(\frac{n^{\delta/3}n^{-\delta/2+1/2}}{(\ln{n})^{\delta}}\right)}.
     \end{split}
\end{equation*}
For \new{$\delta\ge 3$}, this implies
\[
F(p)\frac{\ln(n)+1}{p(1-p)^{n-1}}=o\left(\frac{\new{(\ln n)^{1-\delta}}}{n^{\delta/2-\delta/3-1/2}}\right)=o(1),
\]
as \new{$\delta/2-\delta/3-1/2\ge 0$ and $\delta \ge 2$. It only remains to study the special case where $\delta=2$. Since $k>\delta$, in this specific case, we have $\ell = k - \delta \ge \frac{1}{3}k$. Hence $p=o\left(\frac{1}{\sqrt{nk}}\right)$. This stronger constraint yields
\begin{equation*}
     \begin{split}
      F(p)p^{-1}(1-p)^{1-n} & =(1+o(1))\binom{k}{\delta}\left(\frac{p}{1-p}\right)^{\delta-1}\\
     & = O(k^{2}p)\\
     & = o\left(\frac{k^{3/2}}{\sqrt{n}}\right)\\
     & \le o\left(\frac{1}{(\ln{n})^{3/2}}\right).
     \end{split}
\end{equation*}
So again $F(p)\frac{\ln(n)+1}{p(1-p)^{n-1}}=o(1)$. Hence in both cases, we have $T_p(k,\delta,n)\le \frac{1+o(1)}{F(p)}.$}

\new{For the lower bound, $k=o(n)$ implies $\frac{1}{2^n}\sum_{i=0}^{n-k}\binom{n}{i} = 1- o(1)$ and thus $T_p(k,\delta,n)\ge \frac{1-o(1)}{F(p)}$. Consequently,} 
\[T_p(k,\delta,n)= (1\pm o(1))\frac{1}{F(p)}= (1\pm o(1))\frac{1}{\binom{k}{\delta}p^{\delta}(1-p)^{n-\delta}}.\qedhere\]
\end{proof}
We will now use the lemma above to obtain the best mutation rate in the standard regime. For this, we shall need the following two elementary mathematical results. The first is again known and was already used in~\cite{DoerrLMN17}.

\begin{lemma}\label{elementary_max}
    Let $m\in[0,n]$ The function $p\in[0,1]\mapsto p^m(1-p)^{n-m}$ is unimodal and has a unique maximum in $\frac{m}{n}$.
\end{lemma}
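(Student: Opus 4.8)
The plan is to reduce everything to the behaviour of the logarithmic derivative of $g(p) := p^m(1-p)^{n-m}$. First I would dispose of the two degenerate cases. When $m=0$ the function is $(1-p)^n$, which is strictly decreasing on $[0,1]$ and hence attains its unique maximum at $p=0=\frac mn$; symmetrically, when $m=n$ the function $p^n$ is strictly increasing with unique maximum at $p=1=\frac mn$. So I may assume $0<m<n$ for the remainder.

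In that range, note that $g(0)=g(1)=0$ while $g(p)>0$ for $p\in(0,1)$, so any maximum must lie in the open interval, where I can safely pass to logarithms. Set $h(p):=\ln g(p)=m\ln p+(n-m)\ln(1-p)$ for $p\in(0,1)$; since $\ln$ is strictly increasing, $g$ and $h$ share the same monotonicity and the same maximisers. I would then compute $h'(p)=\frac mp-\frac{n-m}{1-p}$ and solve $h'(p)=0$, which rearranges to $m(1-p)=(n-m)p$, i.e.\ $p=\frac mn$, the claimed location of the optimum.

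To upgrade this critical point to a unique global maximum, and thereby obtain unimodality, the key observation is that $h'$ is strictly decreasing: indeed $h''(p)=-\frac{m}{p^2}-\frac{n-m}{(1-p)^2}<0$ throughout $(0,1)$. Hence $h'$ changes sign exactly once, being positive on $(0,\frac mn)$ and negative on $(\frac mn,1)$; consequently $h$, and therefore $g$, strictly increases on $[0,\frac mn]$ and strictly decreases on $[\frac mn,1]$. This yields both the unimodality of $g$ and the uniqueness of the maximiser $p=\frac mn$.

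I do not anticipate any genuine obstacle here, as the statement is a routine one-variable calculus fact. The only point requiring a moment of care is the boundary behaviour at $p\in\{0,1\}$, which I handle by treating the extreme cases $m\in\{0,n\}$ separately and, for $0<m<n$, by observing that the interior maximum cannot escape to the endpoints since $g$ vanishes there while being positive inside.
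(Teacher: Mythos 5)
Your proof is correct and follows essentially the same route as the paper: the paper differentiates $p^m(1-p)^{n-m}$ directly and factors the derivative as $p^{m-1}(1-p)^{n-m-1}\bigl(m(1-p)-(n-m)p\bigr)$ to read off the single sign change at $p=\frac{m}{n}$, while you obtain the same sign analysis via the logarithmic derivative. Your extra care with the boundary cases $m\in\{0,n\}$ is a minor refinement the paper glosses over, but the argument is the same one-variable calculus fact.
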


\begin{proof}
    The considered function is differentiable, its derivative is \[p\mapsto (p^{m-1}(1-p)^{n-m-1})(m(1-p)\new{-}(n-m)p).\] This derivative only vanishes on $p_0=\frac{m}{n}$, is positive for smaller values of $p$, and negative for larger ones. Consequently, $p_0$ is the unique maximum.
\end{proof}
\begin{corollary}\label{F_decreasing}
    $F(p)$ is decreasing in $[\frac{k+\ell}{n},1]$, where $\ell = k-\delta$.
\end{corollary}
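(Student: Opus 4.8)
The plan is to read off the exact formula for $F(p)$ provided by Lemma~\ref{f(p)} and argue monotonicity term by term. Recall that
\[
F(p)=\sum_{j=0}^{\ell}\sum_{i=0}^{n-k}\binom{k}{\delta+i+j}\binom{n-k}{i}\,p^{\,\delta+2i+j}(1-p)^{\,n-\delta-2i-j}.
\]
Each summand has the shape $c_{i,j}\,p^{m}(1-p)^{n-m}$ with a \emph{non-negative} coefficient $c_{i,j}=\binom{k}{\delta+i+j}\binom{n-k}{i}\ge 0$ and exponent $m=\delta+2i+j$. Since a sum of functions that are all non-increasing on an interval is itself non-increasing there, it suffices to show that every summand with a nonzero coefficient is decreasing on $[\tfrac{k+\ell}{n},1]$. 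By Lemma~\ref{elementary_max}, the map $p\mapsto p^{m}(1-p)^{n-m}$ is unimodal with maximum at $m/n$ and hence decreasing on $[m/n,1]$; so I only need the uniform bound $m/n\le \tfrac{k+\ell}{n}$, i.e. $m\le k+\ell$, over all index pairs $(i,j)$ contributing a nonzero term.

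This bound is the crux, and it is where the combinatorics enters. The factor $\binom{k}{\delta+i+j}$ vanishes unless $\delta+i+j\le k$, that is, unless $i+j\le \ell$. Writing $m=\delta+2i+j=\delta+i+(i+j)$ and using both $i\le \ell$ (which follows from $i+j\le\ell$ and $j\ge0$) and $i+j\le\ell$, I get $m\le \delta+\ell+\ell=\delta+2\ell=k+\ell$, exactly the endpoint of the claimed interval. Thus each nonzero summand attains its maximum at some $m/n\le (k+\ell)/n$ and is decreasing throughout $[\tfrac{k+\ell}{n},1]$, so $F$ is decreasing there.

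I expect the only genuine subtlety to be an edge case rather than the main computation: when $k+\ell>n$ the interval $[\tfrac{k+\ell}{n},1]$ is empty (its left endpoint exceeds~$1$), so the statement is vacuous and needs no argument; it is worth noting that in exactly this regime the largest achievable exponent, constrained now by $i\le n-k$ as well, can reach $m=n$, whose term $p^{n}$ is increasing, which is precisely why the nonempty-interval hypothesis $k+\ell\le n$ cannot be dropped. As long as one restricts attention to $k+\ell\le n$, the term-by-term argument above goes through cleanly, and there is no need to differentiate $F$ directly or to control cancellations, which is the part I would have found annoying. A minor point to state carefully is that Lemma~\ref{elementary_max} is phrased for a single monomial-type function, so I would simply invoke it for each $m\in\{\delta,\dots,k+\ell\}$ and then sum, rather than trying to prove unimodality of $F$ as a whole.
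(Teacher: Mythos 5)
Your proof is correct and follows essentially the same route as the paper: decompose $F(p)$ into the summands $\varepsilon_{i,j}$, note that $\binom{k}{\delta+i+j}$ kills every term with $i+j>\ell$, and apply Lemma~\ref{elementary_max} to see each surviving term peaks at $(\delta+2i+j)/n\le(k+\ell)/n$ and is decreasing beyond that. Your additional remark about the vacuous case $k+\ell>n$ is a fine observation but does not change the argument.
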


\begin{proof}
    Recall from Lemma~\ref{f(p)} that we have $F(p)=\sum_{i,j}\eps_{i,j}$ for \[\eps_{i,j}=\binom{k}{\delta+i+j}\binom{n-k}{i}p^{\delta+2i+j}(1-p)^{n-\delta-2i-j}.\] Applying the previous lemma, $\eps_{i,j}$ is maximal for $p=\frac{\delta+2i+j}{n}$ and decreasing for larger $p$. Since $\eps_{i,j}=0$ if $i+j>\ell$, all $\eps_{i,j}$ are constant\new{ly} zero or decreasing in $[\frac{k+\ell}{n},1]$. Hence $F(p)=\sum_{i,j}\eps_{i,j}$ is decreasing in this interval as well.
\end{proof}
We now state the main result of this section. It directly extends the corresponding result from~\cite{DoerrLMN17} for classic jump functions. It shows in particular that the natural idea of choosing the mutation rate in such a way that the average number of bits that are flipped equals the number of bits that need to be flipped to leave the local optimum, is indeed correct.

\begin{theorem}\label{optEA}
    In the standard regime, the choice of $p\in[0,\frac{1}{2}]$ that asymptotically minimizes the expected runtime of the \oea on $\jump_{k,\delta}$ is $p=\frac{\delta}{n}$.
    For $\delta \ge 2$, it gives the runtime \[T_{\delta/n}(k,\delta,n)=(1\pm o(1))\binom{k}{\delta}^{-1}\left(\frac{en}{\delta}\right)^{\delta},\] and any deviation from the optimal rate by a constant factor $(1\pm\eps)$, $\eps\in(0,1)$, leads to an increase of the runtime by a factor exponential in $\delta$. 
\end{theorem}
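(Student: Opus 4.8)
The plan is to exploit that, by Lemma~\ref{gb3}, the runtime is $(1\pm o(1))/F(p)$ while $F(p)$ is dominated by its leading term $\eps_{0,0}(p) = \binom{k}{\delta}p^\delta(1-p)^{n-\delta}$, so that minimizing the runtime amounts to maximizing $p^\delta(1-p)^{n-\delta}$; by Lemma~\ref{elementary_max} (with $m=\delta$) this is maximized at $p=\delta/n$. First I would check that $p=\delta/n$ lies in the regime of Lemma~\ref{gb3}, i.e. $\delta/n = o(1/\sqrt{n\ell})$, which reduces to $\delta^2\ell = o(n)$ and holds because $\delta,\ell\le k\le n^{1/3}/\ln n$ give $\delta^2\ell\le k^3 = o(n)$. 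Substituting $p=\delta/n$ into Lemma~\ref{gb3} and using $(1-\delta/n)^{n-\delta} = (1+o(1))e^{-\delta}$ (valid since $\delta^2/n\le k^2/n = o(1)$) then yields $T_{\delta/n}(k,\delta,n) = (1\pm o(1))\binom{k}{\delta}^{-1}(en/\delta)^\delta$.

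For the deviation bound I would take $p = c\delta/n$ with $c = 1\pm\eps$ a fixed constant; since $c$ is constant, $p$ stays in the regime of Lemma~\ref{gb3}, so $T_{c\delta/n} = (1\pm o(1))/(\binom{k}{\delta}p^\delta(1-p)^{n-\delta})$. Forming the ratio $T_{c\delta/n}/T_{\delta/n}$, the binomial factor cancels and a first-order expansion of $\ln(1-x)$ (the error terms being $O(\delta^2/n) = o(1)$) gives $T_{c\delta/n}/T_{\delta/n} = (1\pm o(1))(e^{c-1}/c)^\delta$. Writing $g(c) := (c-1)-\ln c$, one has $g(1)=0$, $g'(c) = 1-1/c$ and $g''>0$, so $g$ is strictly convex with unique minimum $0$ at $c=1$; hence $e^{c-1}/c = e^{g(c)}$ is a constant strictly larger than $1$ for every $c = 1\pm\eps \ne 1$, and the runtime grows by a factor $\exp(\Omega(\delta))$.

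The hard part will be the global optimality claim — that no $p\in[0,\tfrac12]$ beats $\delta/n$, not merely the rates $c\delta/n$. Here I would use the general lower bound $T_p \ge (1-o(1))/F(p)$ from Theorem~\ref{general bound} (using $\frac{1}{2^n}\sum_{i=0}^{n-k}\binom{n}{i} = 1-o(1)$ as $k=o(n)$), so that it suffices to prove $\sup_{p\le 1/2}F(p) \le (1+o(1))F(\delta/n)$. The case $\ell=0$ is immediate, since then $F(p) = \binom{k}{\delta}p^\delta(1-p)^{n-\delta}$ exactly, maximized at $\delta/n$. For $\ell\ge1$ I would split the range using the explicit bound $F(p)\le\eps_{0,0}(p)\big(1+\tfrac{p}{1-p}+(n-k)\tfrac{p^2}{(1-p)^2}\big)^\ell$ from the proof of Lemma~\ref{gb2} together with the unimodality of $\eps_{0,0}$: for $p\le M\delta/n$ (a large constant $M$) the correction factor is $\exp(O(\delta^2\ell/n)) = 1+o(1)$, so $F(p)\le(1+o(1))F(\delta/n)$; for $M\delta/n < p\le 1/\sqrt{n\ell}$ the correction factor is at most a constant while $\eps_{0,0}(p)\le\eps_{0,0}(M\delta/n)\le(Me^{1-M})^\delta\eps_{0,0}(\delta/n)$, so choosing $M$ large absorbs that constant; finally for $p > 1/\sqrt{n\ell}$ one checks $\tfrac{k+\ell}{n} < 1/\sqrt{n\ell}$ (which holds since $k^3=o(n)$) and invokes Corollary~\ref{F_decreasing} to get $F(p)\le F(1/\sqrt{n\ell})$, already controlled by the previous case. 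The genuine obstacle is the middle regime $p\in[M\delta/n, 1/\sqrt{n\ell}]$, where the approximation of Lemma~\ref{gb2} degrades from $(1+o(1))$ to a constant factor; the key idea is to offset that lost constant against the (super-)exponentially small value of $\eps_{0,0}(p)$ once $p$ is a large constant multiple of its maximizer $\delta/n$.
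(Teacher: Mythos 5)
Your proof is correct, and its first two parts coincide with the paper's: the value of $T_{\delta/n}$ is obtained exactly as in the paper (check $\delta/n=o(1/\sqrt{n\ell})$, apply Lemma~\ref{gb3}, estimate $(1-\delta/n)^{\delta-n}=(1+o(1))e^{\delta}$), and your deviation bound $T_{c\delta/n}/T_{\delta/n}=(1\pm o(1))(e^{c-1}/c)^{\delta}=e^{(1+o(1))g(c)\delta}$ is the same computation the paper carries out, which records the explicit constants $\exp(\delta\eps^2/2)$ and $\exp(\delta(\eps^2/2-\eps^3/3))$. Where you genuinely diverge is the global optimality over all $p\in[0,\frac{1}{2}]$. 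The paper cuts only once, at $p_0=\frac{k+\ell}{n}$: since $p_0=o(1/\sqrt{n\ell})$ in the standard regime, Lemma~\ref{gb3} applies uniformly on $[0,p_0]$ and Lemma~\ref{elementary_max} settles that range in one line, while on $[p_0,1]$ Corollary~\ref{F_decreasing} gives $F(p)\le F(p_0)$, hence $T_p\ge(1-o(1))/F(p_0)\ge(1-o(1))T_{\delta/n}$ via the lower bound of Theorem~\ref{general bound}. You instead cut at $M\delta/n$ and at $1/\sqrt{n\ell}$; because the right endpoint of your middle range is only $\Theta(1/\sqrt{n\ell})$ rather than $o(1/\sqrt{n\ell})$, Lemma~\ref{gb2} no longer yields a $(1+o(1))$ factor there, and your fix --- trading the resulting $O(1)$ loss in the correction factor $\left(1+\tfrac{p}{1-p}+(n-k)\tfrac{p^2}{(1-p)^2}\right)^{\ell}$ against the decay $\eps_{0,0}(M\delta/n)\le(1+o(1))(Me^{1-M})^{\delta}\eps_{0,0}(\delta/n)$ for $M$ large and $\delta\ge 2$ --- is sound but is extra work that the paper's choice of cut point avoids entirely. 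Two minor points: you should dispose of $\delta=1$ separately, since Lemma~\ref{gb3} requires $\delta\ge2$ (the paper does this by noting $\jump_{k,1}=\onemax$ and citing the known optimality of $1/n$); and your middle range tacitly uses that the inequality $F(p)\le\eps_{0,0}(p)\left(1+\tfrac{p}{1-p}+(n-k)\tfrac{p^2}{(1-p)^2}\right)^{\ell}$ from the proof of Lemma~\ref{gb2} is valid for all $p$, not only for $p=o(1/\sqrt{n\ell})$ --- true, but worth saying explicitly.
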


\begin{proof}
    If $\delta = 1$, the objective function is the \onemax function, for which $p = \frac 1n$ is known to be the asymptotically optimal mutation rate~\cite{Witt13}. So let us assume $\delta \ge 2$ in the remainder. We first notice that, in the standard regime, if $k>\delta$\new{,} $\sqrt{n\ell}\frac{\delta}{n}=\frac{\sqrt{\ell}\delta}{\sqrt{n}}\leq\frac{k^{3/2}}{n^{1/2}}$. Since $k=o(n^{1/3})$, this is $o(1)$, so $\frac{\delta}{n}=o\left(\frac{1}{\sqrt{n\ell}}\right)$. If $k=\delta$, it is obvious that $\frac{\delta}{n}=o\left(\frac{1}{\sqrt{n}}\right)$. In both cases Lemma~\ref{gb3} can be applied and yields \[T_{\delta/n}(k,\delta,n)=(1\pm o(1))\binom{k}{\delta}^{-1}\left(\frac{n}{\delta}\right)^{\delta}\left(1-\frac{\delta}{n}\right)^{\delta-n}\new{.}\] Furthermore, $\left(1-\frac{\delta}{n}\right)^{\frac{n}{\delta}-1}\ge e^{-1}$ and $\left(1-\frac{\delta}{n}\right)^{n-\delta}\le \exp\left(\delta-\frac{\delta^2}{n}\right)=(1\new{-}o(1))e^{\delta}$ since $\delta = o(n^{\frac{1}{2}})$, thus $\left(1-\frac{\delta}{n}\right)^{\delta-n} = (1\new{-}o(1))e^{\delta}$, which proves our claim about $T_{\delta/n}(k,\delta,n)$. 
    
    We now prove that this runtime is optimal among all mutation rates. Note that, in the standard regime, $\frac{k+\ell}{n}=o\left(\frac{1}{\sqrt{n\ell}}\right)$ (and $\frac{k}{n}=o\left(\frac{1}{\sqrt{n}}\right)$ in the specific case $\ell=0$).
    Thus, for $p\leq \frac{k+\ell}{n}$ Lemma~\ref{gb3} can be applied and gives $T_p(k,\delta,n)= (1+o(1))\frac{1}{\binom{k}{\delta}p^{\delta}(1-p)^{n-\delta}}$. By Lemma~\ref{elementary_max}, $p^{\delta}(1-p)^{n-\delta}$ is maximized for $p=\frac{\delta}{n}$. Therefore, $T_{\delta/n}(k,\delta,n)\leq (1+o(1)) T_p(k,\delta,n)$.
    
    \new{It only remains to regard the case} $p\geq\frac{k+\ell}{n}$. We saw in Lemma~\ref{F_decreasing} that $F(p)$ is decreasing in $[\frac{k+\ell}{n},1]$. Therefore, using the lower bound from Theorem~\ref{general bound}, and applying results from the last paragraph to $\frac{k+\ell}{n}$, we obtain 
    \begin{equation*}
     \begin{split}
      T_{p}(k,\delta,n) & \geq(1\new{-}o(1))\frac{1}{F(p)}\\
     & \geq (1\new{-}o(1))\frac{1}{F(\frac{k+\ell}{n})}\\
     & \geq (1\new{-}o(1)) T_{(k+\ell)/n}(k,\delta,n)\\
     & \geq (1\new{-}o(1)) T_{\delta/n}(k,\delta,n).\\
     \end{split}
    \end{equation*}
    This shows that the mutation rate $\frac{\delta}{n}$ is asymptotically optimal among all mutation rates in $[0,\frac{1}{2}]$.
    
    Finally, let $\eps \in (0,1)$. By Lemma~\ref{gb3} we have
    \begin{equation*}
     \begin{split}
         \frac{T_{\frac{\delta}{n}(1\pm \eps)}(k,\delta,n)}{T_{\frac{\delta}{n}}(k,\delta,n)} & = \frac{(1+o(1))\binom{k}{\delta}^{-1}(\frac{\delta}{n}(1\pm \eps))^{-\delta}(1-\frac{\delta}{n}(1\pm \eps))^{\delta-n}}{(1+o(1))\binom{k}{\delta}^{-1}(\frac{\delta}{n})^{-\delta}(1-\frac{\delta}{n})^{\delta-n}}\\
     & = (1+o(1))(1\pm \eps)^{-\delta}\left(\frac{n-\delta(1\pm \eps)}{n-\delta}\right)^{\delta-n}\\
     & = (1+o(1))(1\pm \eps)^{-\delta}\left(1\mp\frac{\eps \delta}{n-\delta}\right)^{\delta-n}\\
     & = (1+o(1))\exp\left(-\delta\ln(1\pm\eps)+(\delta-n)\ln\left(1\mp \frac{\eps\delta}{n-\delta}\right)\right)\\
     & \ge (1+o(1))\exp\left(-\delta\ln(1\pm\eps)\pm\eps\delta\right),
     \end{split}
    \end{equation*}
    where we used that $\ln\left(1\pm x\right)\le \pm x$. Now $\ln(1-\eps)\le -\eps - \frac{\eps^2}{2}$ and $\ln(1+\eps)\le \eps - \frac{\eps^2}{2}+\frac{\eps^3}{3}$. Hence 
    \[
    T_{\frac{\delta}{n}(1- \eps)}(k,\delta,n) \ge (1+o(1)) \exp{\left(\delta\frac{\eps^2}{2}\right)}T_{\frac{\delta}{n}}(k,\delta,n)
    \]
    and
    \[
    T_{\frac{\delta}{n}(1+ \eps)}(k,\delta,n) \ge (1+o(1))\exp\left({\delta\left(\frac{\eps^2}{2}-\frac{\eps^3}{3}\right)}\right)T_{\frac{\delta}{n}}(k,\delta,n).
    \]
    So, any deviation from the optimal mutation rate by a small constant factor leads to an increase in the runtime by a factor exponential in $\delta$.
\end{proof}
\section{Heavy-tailed Mutation}
\label{heavytailed}

In this section we analyze the fast \oea, or \ofea, which was introduced in~\cite{DoerrLMN17}. Instead of fixing one mutation rate for the entire run, here the mutation rate is chosen randomly at every iteration, using the power-law distribution $D^{\beta}_{n/2}$ for some $\beta>1$. We recall the definition from~\cite{DoerrLMN17}.

\begin{definition}
    Let $\beta > 1 $ be a constant. Then the discrete power-law distribution $D^{\beta}_{n/2}$ on $[1..n/2]$ is defined as follows. If a random variable $X$ follows the distribution $D^{\beta}_{n/2}$, then $\Pr[X=\alpha]=(C^{\beta}_{n/2})^{-1}\alpha^{-\beta}$ for all $\alpha \in [1..n/2]$, where the normalization constant is $C^{\beta}_{n/2} := \sum_{i=1}^{\lfloor n/2\rfloor}i^{-\beta}$.
\end{definition}

\begin{algorithm2e}%
    \textbf{Initialization}\;
	$x\in \{0,1\}^n \assign \text{uniform at random}$\;
	\textbf{Optimization}\;
	\Repeat{Stopping condition}{
	Sample $\alpha$ randomly in $[1..n/2]$ according to $D^{\beta}_{n/2}$\;
    $\Sample y\in\{0,1\}^n$ by flipping each bit in $x$ with probability $\frac{\alpha}{n}$\;
    \If{$f(y)\ge f(x)$}{$ x \assign y$}
  }
\caption{The \ofea maximizing $f : \{0,1\}^n \to \R$.}
\label{alg:fea}
\end{algorithm2e}

Doerr et al.~\cite{DoerrLMN17} proved that for the $\jump_k$ function, the expected runtime of the \ofea was only a small polynomial (in $k$) factor above the runtime with the optimal fixed mutation rate.

\begin{theorem}[\cite{DoerrLMN17}]\label{ht upper bound}
    Let $n\in \N$ and $\beta>1$. For all $k\in[2..n/2]$ with $k > \beta -1$, the expected optimization time $T_{\beta}(k,n)$ of the \ofea on $\jump_k$ satisfies
    \[T_{\beta}(k,n) = O\left(C^{\beta}_{n/2}k^{\beta - 0.5} T_{\mathrm{opt}}(k,n)\right),\]
    where $T_{\mathrm{opt}}(k,n)$ is the expected runtime of the \oea with the optimal fixed mutation rate $p=\frac{k}{n}$.
\end{theorem}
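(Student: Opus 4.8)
The plan is to reduce the runtime of the \ofea on $\jump_k$ to the expected number of iterations needed to cross the fitness valley in one step, and then to lower-bound the per-iteration crossing probability by exploiting the concentration of the binomial weight $(\alpha/n)^k(1-\alpha/n)^{n-k}$ around $\alpha = k$.

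First I would argue that the runtime is dominated by the time spent at the local optimum (the search points at Hamming distance exactly $k$ from the optimum). Since $\Pr[X=1]=(C^{\beta}_{n/2})^{-1}=\Theta(1)$ for $\beta>1$, in each iteration the algorithm performs standard bit mutation with rate $\frac1n$ with constant probability, which suffices to climb the \onemax-like easy parts of $\jump_k$ in $O(n\log n)$ expected iterations; this is of lower order. Hence $T_{\beta}(k,n)=(1+o(1))/P_{\mathrm{cross}}$, where $P_{\mathrm{cross}}$ is the probability that a single iteration started at the local optimum produces the global optimum. From a local optimum the only improving move is to flip exactly the $k$ zero-bits and none of the one-bits, which under mutation rate $\alpha/n$ has probability $g(\alpha):=(\alpha/n)^k(1-\alpha/n)^{n-k}$. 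Averaging over the power-law distribution gives
\[
P_{\mathrm{cross}} = (C^{\beta}_{n/2})^{-1}\sum_{\alpha=1}^{\lfloor n/2\rfloor}\alpha^{-\beta}\,g(\alpha).
\]
By Lemma~\ref{elementary_max}, $g$ is maximized at $\alpha=k$, and $g(k)=(k/n)^k(1-k/n)^{n-k}=\Theta(T_{\mathrm{opt}}(k,n)^{-1})$.

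The heart of the argument, and the main obstacle, is to show that the sum gains a factor $\Theta(\sqrt{k})$ over the single peak term $k^{-\beta}g(k)$, which is precisely what turns the naive $k^{\beta}$ bound into the claimed $k^{\beta-0.5}$. The key computation is that $\log g$ has second derivative $-\frac1k-\frac{1}{n-k}\approx-\frac1k$ at its maximum $\alpha=k$, so $g$ stays within a constant factor of $g(k)$ over a window of $\Theta(\sqrt{k})$ consecutive integers around $k$; over the same window one also has $\alpha^{-\beta}=\Theta(k^{-\beta})$. Restricting the sum to $\alpha\in[k,k+c\sqrt{k}]$ for a suitable constant $c$ and bounding each of the $\Theta(\sqrt{k})$ terms below by $\Omega(k^{-\beta}g(k))$ then yields
\[
P_{\mathrm{cross}} = \Omega\!\left((C^{\beta}_{n/2})^{-1}k^{-\beta+0.5}g(k)\right),
\]
and inverting gives $T_{\beta}(k,n)=O(C^{\beta}_{n/2}k^{\beta-0.5}T_{\mathrm{opt}}(k,n))$.

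The remaining care concerns the hypothesis $k>\beta-1$. I would verify that the combined function $\alpha^{-\beta}g(\alpha)$ attains its maximum at $\alpha^{\ast}=\frac{(k-\beta)n}{n-\beta}\approx k-\beta$, so the assumption ensures this effective maximizer stays in the admissible range $[1,n/2]$ and, together with $k\le n/2$, that the entire $\Theta(\sqrt{k})$-window used in the lower bound lies inside $[1,\lfloor n/2\rfloor]$. I expect the window estimate to be the only genuinely delicate step; the valley-domination reduction and the evaluation of $g(k)$ in terms of $T_{\mathrm{opt}}$ are routine and follow the pattern already used for the fixed-rate analysis in Lemma~\ref{gb3} and Theorem~\ref{optEA}.
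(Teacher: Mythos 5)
Your proposal is correct and follows essentially the same route as the proof this theorem rests on: the result is quoted here from \cite{DoerrLMN17} without proof, and your $\Theta(\sqrt{k})$-window argument around $\alpha=k$ (summing $\alpha^{-\beta}g(\alpha)$ over the $\Theta(\sqrt k)$ integers where $g$ is within a constant of its peak) is precisely the content of Lemma~\ref{Pbeta}, while converting $g(k)$ into $T_{\mathrm{opt}}(k,n)$ with the extra $k^{-0.5}$ gain is exactly the Stirling computation of Corollary~\ref{Pbeta_Precise}, as used in the paper's proof of the generalized version (Theorem~\ref{ht new upper bound}). The only detail to adjust is that for $k$ near $n/2$ the window $[k,k+c\sqrt{k}]$ may leave $[1..\lfloor n/2\rfloor]$, so one should take it on the left of $k$ (or two-sided); this is cosmetic.
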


We now show that this result extends to the $\jump_{k,\delta,n}$ problem. To do so, we rely on Lemma 3 (i) and (ii) of~\cite{DoerrLMN17}, restated in the following lemma.

\begin{lemma}[\cite{DoerrLMN17}]\label{Pbeta}
    There is a constant $K>0$ such that the following is true. Let $n\in\N$ and $\beta > 1$. Let $x\in\{0,1\}^n$ an individual and $y$ the offspring generated from $x$ in an iteration of the \ofea. For all $i\in[1..n/2]$ such that $i>\beta-1$, we have
    \[P_{i}^{\beta}:=\Pr[H(x,y)=i]\geq (C_{n/2}^{\beta})^{-1}Ki^{-\beta}.\]
    Moreover, with the same notations, \[P_{1}^{\beta}\geq K'{\new{(}C_{n/2}^{\beta}}\new{)}^{-1},\]
    for another constant $K'$ independent of $\beta$ and $n$.
\end{lemma}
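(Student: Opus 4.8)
The plan is to reduce both inequalities to the single elementary identity obtained by conditioning on the sampled mutation strength. Writing $\alpha$ for the value drawn from $D^\beta_{n/2}$, the offspring $y$ is produced from $x$ by flipping each bit independently with probability $\alpha/n$, so that, conditionally on $\alpha$, the Hamming distance $H(x,y)$ is distributed as $\Bin(n,\alpha/n)$. The law of total probability then gives
\[
P_i^\beta = \sum_{\alpha=1}^{\lfloor n/2\rfloor} (C_{n/2}^\beta)^{-1}\alpha^{-\beta}\binom{n}{i}\left(\frac{\alpha}{n}\right)^i\left(1-\frac{\alpha}{n}\right)^{n-i},
\]
and each of the two bounds will follow by lower-bounding this nonnegative sum by a well-chosen subsum.

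For the second, easier, bound I would simply keep the single term $\alpha=1$. With $i=1$ the binomial factor is $(1-1/n)^{n-1}$, which is at least $e^{-1}$ by the standard inequality $(1-1/n)^{n-1}\ge e^{-1}$. Hence $P_1^\beta \ge e^{-1}(C_{n/2}^\beta)^{-1}$, so that $K'=e^{-1}$, a constant independent of $\beta$ and $n$, as required.

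For the first bound the crucial observation is that keeping only the single term $\alpha=i$ is \emph{not} enough: it yields $\Pr[\Bin(n,i/n)=i]=\Theta(i^{-1/2})$, the central value of a binomial with mean $i$ and variance $\Theta(i)$, which falls short of the target by a factor $\sqrt i$. To recover this factor I would sum over a window of width $\Theta(\sqrt i)$ around the mode, say $\alpha\in[i,\,i+\lceil\sqrt i\,\rceil]$ (taking the window on whichever side stays inside $[1..n/2]$). On this window $\alpha^{-\beta}=\Omega(i^{-\beta})$, since $\alpha\le 2i$ and $\beta$ is constant, and one has the uniform bound $\Pr[\Bin(n,\alpha/n)=i]=\Omega(i^{-1/2})$, because $i$ lies within $O(1)$ standard deviations of the mean $\alpha$ (the variance being $\alpha(1-\alpha/n)=\Theta(i)$ for $\alpha\le n/2$). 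As the window contributes $\Theta(\sqrt i)$ terms, the subsum is of order $(C_{n/2}^\beta)^{-1}\cdot\sqrt i\cdot i^{-\beta}\cdot i^{-1/2}=\Omega((C_{n/2}^\beta)^{-1}i^{-\beta})$, matching the claim.

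The main obstacle is precisely the uniform lower bound $\Pr[\Bin(n,\alpha/n)=i]=\Omega(i^{-1/2})$ across the whole window. This is a local-limit-type estimate, which I would establish either by a direct Stirling-formula computation of the central binomial term together with a control of the ratio of neighbouring point masses (showing it does not drop by more than a constant factor over $O(\sqrt i)$ consecutive values), or by invoking a local central limit theorem for the binomial. The hypothesis $i>\beta-1$ is only a mild technical requirement ensuring the window and the power-law weighting behave as described; for constant $\beta$ it merely excludes the smallest values of $i$, which are covered anyway by the separate bound on $P_1^\beta$. Since the statement is a verbatim restatement of Lemma~3(i),(ii) of~\cite{DoerrLMN17}, one may alternatively just cite that reference, the sketch above indicating the route a self-contained argument would take.
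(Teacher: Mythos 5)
This lemma is imported verbatim from~\cite{DoerrLMN17} and the paper gives no proof of it, so the only meaningful comparison is with the original proof there; your reconstruction follows essentially that same route (condition on the sampled rate $\alpha$, observe that the single term $\alpha=i$ only yields $\Theta\big((C_{n/2}^{\beta})^{-1}i^{-\beta-1/2}\big)$, and recover the missing $\sqrt{i}$ by summing over a $\Theta(\sqrt{i})$-window of rates around $\alpha=i$ on which $\Pr[\Bin(n,\alpha/n)=i]=\Omega(i^{-1/2})$), and your treatment of $P_1^{\beta}$ via the single term $\alpha=1$ and $(1-1/n)^{n-1}\ge e^{-1}$ is exactly the standard argument. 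The one step you leave as a sketch --- the uniform local-limit lower bound across the window --- is indeed the technical core and goes through by the Stirling/ratio computation you indicate; note only that your constant $K$ picks up a factor $2^{-\beta}$ from the bound $\alpha\le 2i$ (harmless, since $\beta$ is a constant), and you are right that the hypothesis $i>\beta-1$ plays no role in this lower-bound direction.
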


We note that explicit lower bounds for $P_i^\beta$ for $i \le \sqrt n$ were given in the arXiv version of~\cite{DoerrZ21aaai}. We derive from the lemma above the following estimate.

\begin{corollary}\label{Pbeta_Precise}
There is a constant $\kappa>0$ such that the following is true. For all $n, k,\beta-1<\delta\leq \frac{n}{2}$, \[\binom{n}{\delta}\binom{k}{\delta}^{-1} \left(P^{\beta}_{\delta}\right)^{-1} \leq \kappa C^{\beta}_{n/2} \delta ^{\beta - 0.5} \frac{n^n}{\delta^{\delta} (n-\delta)^{(n-\delta)}}\sqrt{\frac{n}{n-\delta}}\binom{k}{\delta}^{-1}.\]
\end{corollary}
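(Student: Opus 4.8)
The plan is to combine the lower bound on $P_\delta^\beta$ from Lemma~\ref{Pbeta} with a Stirling-type estimate of the binomial coefficient $\binom{n}{\delta}$. First I would invoke Lemma~\ref{Pbeta}: since we are in the regime $\delta > \beta - 1$, the lemma gives $P_\delta^\beta \ge (C_{n/2}^\beta)^{-1} K \delta^{-\beta}$ for the constant $K > 0$. Taking reciprocals, this yields $(P_\delta^\beta)^{-1} \le K^{-1} C_{n/2}^\beta \delta^\beta$. Substituting this into the left-hand side of the claimed inequality, the factors $\binom{k}{\delta}^{-1}$ cancel against those on the right, and it remains to show that
\[
\binom{n}{\delta} \delta^\beta \le \kappa' \, \delta^{\beta - 0.5} \frac{n^n}{\delta^\delta (n-\delta)^{n-\delta}} \sqrt{\frac{n}{n-\delta}}
\]
for some constant $\kappa'$, after which we set $\kappa = \kappa'/K$.

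The heart of the argument is thus a Stirling estimate of the central binomial-type coefficient. Dividing both sides by $\delta^\beta$, the target reduces to $\binom{n}{\delta} \le \kappa' \, \delta^{-0.5} \frac{n^n}{\delta^\delta (n-\delta)^{n-\delta}} \sqrt{\frac{n}{n-\delta}}$. I would expand $\binom{n}{\delta} = \frac{n!}{\delta! (n-\delta)!}$ and apply the two-sided Stirling bound $\sqrt{2\pi m}\,(m/e)^m \le m! \le e\sqrt{m}\,(m/e)^m$ to each of the three factorials. The exponential factors $e^{-n}$, $e^{\delta}$, $e^{n-\delta}$ cancel exactly, and the polynomial factors $m^m$ reassemble into precisely $\frac{n^n}{\delta^\delta (n-\delta)^{n-\delta}}$. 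The remaining $\sqrt{\cdot}$ prefactors from Stirling combine to give a term of order $\sqrt{\frac{n}{\delta(n-\delta)}} = \delta^{-1/2}\sqrt{\frac{n}{n-\delta}}$, which matches the right-hand side up to the absolute constant absorbed into $\kappa'$.

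I do not expect any genuine obstacle here; this is a routine but bookkeeping-heavy Stirling computation. The one point requiring a little care is matching the $\sqrt{\cdot}$ prefactors so that a $\delta^{-1/2}$ factor is correctly extracted and the dependence on $n-\delta$ comes out as $\sqrt{n/(n-\delta)}$ rather than some other power; this is exactly what pins down the stated exponent $\beta - 0.5$ (rather than $\beta$) in the final bound. Since all the exponential and $m^m$ contributions cancel cleanly, the only freedom left is the constant, which is why the statement is phrased with an unspecified $\kappa$. Once the Stirling matching is verified and the constants from Lemma~\ref{Pbeta} and from Stirling are folded together, the claim follows.
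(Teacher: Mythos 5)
Your proposal is correct and follows essentially the same route as the paper: lower-bound $P_\delta^\beta$ via Lemma~\ref{Pbeta} (using $\delta > \beta-1$), then upper-bound $\binom{n}{\delta}$ by the two-sided Stirling estimate so that the $m^m$ and $e^{\pm m}$ factors cancel and the square-root prefactors yield exactly the $\delta^{-0.5}\sqrt{n/(n-\delta)}$ term. The bookkeeping you describe matches the paper's computation, with all absolute constants absorbed into $\kappa$.
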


\begin{proof}
Using the Stirling approximation \[\sqrt{2\pi}n^{n+0.5}e^{-n}\le n! \le en^{n+0.5}e^{-n},\] we compute
        \begin{equation*}
        \begin{split}
         \binom{n}{\delta} & \le \frac{en^{n+0.5}e^{-n}}{\sqrt{2\pi}\delta^{\delta+0.5}e^{-\delta}\sqrt{2\pi}(n-\delta)^{n-\delta+0.5}e^{-n+\delta}}\\
         & \le \frac{e}{2\pi\delta^{0.5}}\sqrt{\frac{n}{n-\delta}}\frac{n^n}{\delta^{\delta} (n-\delta)^{(n-\delta)}}.
        \end{split}
        \end{equation*}
Combining this and Lemma~\ref{Pbeta} gives the claimed result.
\end{proof}
\begin{theorem}\label{ht new upper bound}
    Let $n\in \N$ and $\beta>1$. For $\delta \le \new{k \le \frac{n^{1/3}}{\ln{n}}}$ with $2\le \delta > \beta-1$, the expected optimization time $T_{\beta}(k,\delta,n)$ of the \ofea satisfies
    \[T_{\beta}(k,\delta,n) = O\left(C^{\beta}_{n/2}\delta^{\beta - 0.5} T_{\delta/n}(k,\delta,n)\right).\]
\end{theorem}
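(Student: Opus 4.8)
The plan is to bound $T_{\beta}(k,\delta,n)$ by the fitness level method, mirroring the proof of Theorem~\ref{general bound} but replacing the fixed-rate transition probabilities by lower bounds valid for the \ofea. Partition $\{0,1\}^n$ into the levels $A_i = \{x : \|x\|_1 = i\}$, ordered by fitness, and lower-bound the probability $s_i$ of leaving $A_i$ upward as follows. At the local optimum $i = n-k$, a sure way to cross the valley is to flip exactly $\delta$ of the $k$ zero-bits; conditioning on the \ofea flipping exactly $\delta$ bits (which happens with probability $P^{\beta}_{\delta}$, and since $\delta > \beta-1$ Lemma~\ref{Pbeta} applies) and on those bits lying among the $k$ zero-bits yields $s_{n-k} \ge P^{\beta}_{\delta}\binom{k}{\delta}\binom{n}{\delta}^{-1}$. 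For every other level a single correct one-bit flip suffices: $s_i \ge P^{\beta}_1 \frac{n-i}{n}$ below and above the valley, and $s_i \ge P^{\beta}_1 \frac{i}{n}$ inside the valley (where fitness increases as $\|x\|_1$ decreases), and I would then invoke $P^{\beta}_1 \ge K'(C^{\beta}_{n/2})^{-1}$ from Lemma~\ref{Pbeta}.

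Summing $1/s_i$ splits the runtime into the valley-crossing term $\binom{n}{\delta}\binom{k}{\delta}^{-1}(P^{\beta}_{\delta})^{-1}$ coming from the level $i=n-k$, and a lower-order term from the easy OneMax-like parts. The easy-part harmonic sums telescope exactly as in Theorem~\ref{general bound} into $\le 2(\ln n + 1)$, so that contribution is $O(C^{\beta}_{n/2}\,n\log n)$. For the crossing term, Corollary~\ref{Pbeta_Precise} already delivers the convenient bound $\binom{n}{\delta}\binom{k}{\delta}^{-1}(P^{\beta}_{\delta})^{-1} \le \kappa\, C^{\beta}_{n/2}\,\delta^{\beta-0.5}\,\frac{n^n}{\delta^{\delta}(n-\delta)^{n-\delta}}\sqrt{\frac{n}{n-\delta}}\binom{k}{\delta}^{-1}$.

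It then remains to identify the product $\frac{n^n}{\delta^{\delta}(n-\delta)^{n-\delta}}\sqrt{\frac{n}{n-\delta}}\binom{k}{\delta}^{-1}$ with $T_{\delta/n}(k,\delta,n)$. Since $\delta \le k = o(n^{1/3})$, I have $(1-\frac{\delta}{n})^{n-\delta} = (1\pm o(1))e^{-\delta}$ and $\sqrt{\frac{n}{n-\delta}} = 1+o(1)$, so that $\frac{n^n}{\delta^{\delta}(n-\delta)^{n-\delta}}\sqrt{\frac{n}{n-\delta}} = (1\pm o(1))\left(\frac{en}{\delta}\right)^{\delta}$; combining this with the value $T_{\delta/n}(k,\delta,n) = (1\pm o(1))\binom{k}{\delta}^{-1}\left(\frac{en}{\delta}\right)^{\delta}$ from Theorem~\ref{optEA}, the crossing term becomes $O\!\left(C^{\beta}_{n/2}\,\delta^{\beta-0.5}\,T_{\delta/n}(k,\delta,n)\right)$, which is exactly the claimed bound.

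The last and only delicate step is to confirm that the lower-order term is absorbed, i.e.\ that $C^{\beta}_{n/2}\,n\log n = O\!\left(C^{\beta}_{n/2}\,\delta^{\beta-0.5}\,T_{\delta/n}(k,\delta,n)\right)$. Using $\binom{k}{\delta}\le (ek/\delta)^{\delta}$ gives $\binom{k}{\delta}^{-1}\left(\frac{en}{\delta}\right)^{\delta} \ge \left(\frac{n}{k}\right)^{\delta} \ge n^{4/3}(\ln n)^2$ for $\delta \ge 2$ and $k \le \frac{n^{1/3}}{\ln n}$, which is $\omega(n\log n)$; since $\delta^{\beta-0.5}\ge 1$, the crossing term dominates. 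This is precisely where the standard-regime constraint $k \le \frac{n^{1/3}}{\ln n}$ is needed: it guarantees that valley-crossing, rather than the OneMax climb, governs the runtime, so I expect this domination check to be the main point requiring care.
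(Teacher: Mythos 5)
Your proposal follows essentially the same route as the paper's proof: the identical fitness-level decomposition with $s_{n-k}=\binom{k}{\delta}\binom{n}{\delta}^{-1}P^{\beta}_{\delta}$ and one-bit-flip bounds elsewhere, Corollary~\ref{Pbeta_Precise} together with the Stirling/asymptotic identification of $\frac{n^n}{\delta^{\delta}(n-\delta)^{n-\delta}}\sqrt{\frac{n}{n-\delta}}\binom{k}{\delta}^{-1}$ with $(1+o(1))T_{\delta/n}(k,\delta,n)$ for the crossing term, and absorption of the $O(C^{\beta}_{n/2}\,n\log n)$ easy-part contribution. The only difference is that you spell out the final domination check $\left(\frac{n}{k}\right)^{\delta}\ge n^{4/3}(\ln n)^2=\omega(n\log n)$ explicitly where the paper simply cites Theorem~\ref{optEA}; this is correct and arguably a useful addition.
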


\begin{proof}
We use the same notation as in the proof of Theorem~$\ref{general bound}$ and Lemma~\ref{Pbeta}. For $i\in [0..n-1]$, let
\[
s_i := \begin{cases} \frac{n-i}{n}P^{\beta}_1 &\mbox{if } i\in[0..n-k-1]\cup[n-k+\delta..n-1],\\
\binom{k}{\delta}\binom{n}{\delta}^{-1}P^{\beta}_{\delta} & \mbox{if } i=n-k,\\
\frac{i}{n}P^{\beta}_1 & \mbox{if } i\in[n-k+1..n-k+\delta-1].\\
\end{cases} 
\]
Then $s_i$ is a lower bound for the probability that an iteration starting in a point $x\in A_i$ ends with a point of strictly higher fitness. Thus, the fitness level theorem~\cite{Wegener01} implies
\begin{equation*}
\begin{split}
 T_p(k,\delta,n) & \le \sum_{i=0}^{n-1} \frac{1}{s_i}\\
 & \le \binom{n}{\delta}\binom{k}{\delta}^{-1} \left(P^{\beta}_{\delta}\right)^{-1} + \sum_{i=0}^{n-k-1} \frac{n}{(n-i)P^{\beta}_1} \\ & \quad + \sum_{i=n-k+1}^{n-k+\delta-1} \frac{n}{iP^{\beta}_1} +\sum_{i=n-k+\delta}^{n-1} \frac{n}{(n-i)P^{\beta}_1}.
\end{split}
\end{equation*}
Recalling that $\frac{n^n}{\delta^{\delta} (n-\delta)^{(n-\delta)}}\sqrt{\frac{n}{n-\delta}}\binom{k}{\delta}^{-1}=(1+o(1))T_{\delta/n}(k,\delta,n)$ in the standard regime, \new{Corollary~\ref{Pbeta_Precise}} and Lemma~\ref{gb3} allow us to estimate the first term as $O(C^{\beta}_{n/2}\delta^{\beta - 0.5} T_{\delta/n}(k,\delta,n))$. Using \new{the second part of Lemma~\ref{Pbeta}}, and similar bounds as in the proof of Theorem~\ref{general bound}, we deduce that the three other terms add up to $O(\new{C^{\beta}_{n/2}}n\ln(n))$ which is $o(T_{\delta/n}(k,\delta,n))$ by Theorem~\ref{optEA}.
\end{proof}
We note without proof that the upper bound given in the theorem above is asymptotically tight.

\section{Stagnation Detection}

In this section, we study the algorithm \emph{Stagnation Detection Randomized Local Search}, or SD-RLS, proposed by Rajabi and Witt~\cite{RajabiW21evocop} as an improvement of their Stagnation Detection \oea~\cite{RajabiW20} (we note that the most recent variant~\cite{RajabiW21gecco} of this method was developed in parallel to this work and therefore could not be reflected here; \new{however, to the best of our understanding, this latest variant was optimized to deal with multiple local optima and thus does not promise to be superior to the SD-RLS algorithm on generalized jump functions. We note further that in~\cite{RajabiW21gecco} also generalized jump functions were defined. As only result for these, an upper bound of $O(\binom{n}{\delta})$ in our notation was shown under certain conditions -- consequently, similar to our analysis of SD-RLS below, this result also does not profit from the fact that $\binom{k}{\delta}$ improving solutions are available}). The algorithm builds on two ideas that have not been discussed in this work yet: stagnation detection and randomized local search.
Randomized local search is an alternative scheme to standard bit mutation. To produce an offspring from a given individual, instead of flipping each bit independently with a given probability, $s$ bits are chosen uniformly at random and flipped. Therefore the offspring is necessarily at Hamming distance $s$ from its parent. The parameter $s$ is usually referred to as the \emph{strength} of the mutation.
Stagnation detection is a heuristic introduced by Rajabi and Witt \cite{RajabiW20} that can be added to many evolutionary algorithms. When the algorithm has spent a given number of steps without fitness improvement, it increases its mutation strength, as a way to leave the local optimum it might be stuck in. The number of unsuccessful steps in a row needed to increase the strength can depend on the current strength. Its value should be thoughtfully designed, ideally so that the probability of missing an improvement at Hamming distance $s$ is small. In SD-RLS, this value was chosen to be $\binom{n}{s}\ln(R)$, where $s$ is the current strength and $R$ is a control parameter, fixed for the entire run by the user. \cite{RajabiW21evocop} typically use $R$ to be a small polynomial in $n$. We call \emph{step} $s$ the $\binom{n}{s}\ln(R)$ iterations in which strength $s$ is used.

\new{The main flaw of SD-RLS is that infinite runs are possible: for every point in the search space, there is only a finite number of strengths for which a fitness improvement is possible. If these improvements are unluckily missed by the algorithm at all the corresponding steps, SD-RLS would keep increasing the strength forever and never terminate. To avoid such a situation,} the same article introduces Randomized Local Search with Robust Stagnation Detection, or SD-RLS$^*$. When step~$s$ terminates without having found an improvement, instead of increasing the mutation strength to $s+1$ immediately, the algorithm goes back to all previous strengths, in decreasing order, before moving to step $s+1$. This mildly impacts the runtime, but ensures termination in expected finite time. Let \emph{phase} $s$ denote the succession of steps $s,...,1$ where step $s$ is the first interval in which strength $s$ is used. A run of the algorithm can now be seen as a succession of distinct \emph{phases} with increasing $s$ value. The pseudocode of both algorithms is recalled \new{in Algorithms \ref{alg:sdrls} and \ref{alg:sdrlstar}.}

\begin{algorithm2e}\label{alg:sdrls}
    \textbf{Initialization}\;
	$x\in \{0,1\}^n \assign$ uniform at random\;
	$s_1\assign 1$;	$u\assign0$\;
    \textbf{Optimization}\;
    \For{$t\assign 1,2,...$}{
    Create $y$ by flipping $s_t$ bits in a copy of $x$ uniformly\;        
    $u\assign u+1$\;
    \uIf{$f(y)>f(x)$}{$x\assign y$; $s_{t+1}\assign 1$; $u\assign0$\;}
    \ElseIf{$f(y)=f(x)$ \text{\normalfont and} $s_t=1$}{$x\assign y$\;}
    \uIf{$u>\binom{n}{s_t}\ln{R}$}{$s_{t+1}\assign \min\{s_t+1,n\}$; $u \assign 0$\;}
    \Else{$s_{t+1}\assign s_t$\;}
    }
\caption{SD-RLS maximizing $f:\{0,1\}^n\to\R$ with parameter~$R$.}
\end{algorithm2e}

In this section, we shall focus on SD-RLS$^*$, as it ensures termination and was studied more profoundly in \cite{RajabiW21evocop}. We first collect some central properties of the algorithm, before studying its performances on $\jump_{k,\delta}$. For $x\in\{0,1\}^n$, let $\gap(x):=\min\{H(x,y)\mid y\in\{0,1\}^n: f(y)>f(x)\}$ be the distance of $x$ to the closest strictly fitter point. 

The following lemma from~\cite{RajabiW21evocop} demonstrates that even if an improvement at Hamming distance $m$ is not found during phase $m$, the loop structure of the SD-RLS$^*$ ensures that some improvement will still be found in reasonable time.
\begin{lemma}\label{if_missed_sd_rls}
    Let $x\in\{0,1\}^n$, with $m:=\gap(x)<n/2$, be the current search point of the SD-RLS$^*$ optimizing a pseudo-Boolean function $f$, with $R\ge n^{1+\eps}|\im(f)|$ for some constant $\eps>0$. Let $W_x$ be the waiting time until a strict improvement is found. Let $E_m$ be the event that a strict improvement is not found during phase $m$ (all previous phases cannot create a strict improvement). Then,
    \[
    E\left[W_x\mid E_m\right]=o\left(\frac{R}{|\im(f)|}\binom{n}{m}\right).
    \]
\end{lemma}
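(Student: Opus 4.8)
The plan is to exploit the loop structure of SD-RLS$^*$: once $x$ is current with $\gap(x)=m$, every later phase contains a fresh strength-$m$ step, and each such step has an overwhelming chance of crossing to a fitter point. The starting observation is that, since $\gap(x)=m$, there is at least one point at Hamming distance exactly $m$ from $x$ that is strictly fitter, and none closer. A single iteration at strength $m$ flips a uniformly random $m$-subset of the bits, so it lands on a strictly fitter point with probability at least $\binom{n}{m}^{-1}$; consequently a complete strength-$m$ step, which consists of $\binom{n}{m}\ln(R)$ iterations, fails to find an improvement with probability at most $(1-\binom{n}{m}^{-1})^{\binom{n}{m}\ln(R)}\le 1/R$. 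Since no point within Hamming distance $m-1$ is fitter, all steps at strengths $1,\dots,m-1$ are necessarily fruitless.

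Next I would describe the phase dynamics. Each phase $s\ge m$ runs the strengths $s,s-1,\dots,1$ in turn and hence contains exactly one strength-$m$ step. Conditioning on $E_m$, the run continues into phases $m+1,m+2,\dots$; because the iterations of each phase use independent randomness, conditioned on reaching phase $m+j$ its strength-$m$ step (and thus the whole phase) is successful with probability at least $1-1/R$, irrespective of the history. Therefore $\Pr[\text{phase } m+j \text{ is reached}\mid E_m]\le R^{-(j-1)}$, so the number of phases still to be run after the failed phase $m$ is geometrically small.

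It remains to do the cost bookkeeping. The full cost of phase $s$ is $\ln(R)\sum_{r=1}^{s}\binom{n}{r}\le \ln(R)\,s\binom{n}{s}$ for $s<n/2$, and this upper-bounds the cost incurred in any phase whether it succeeds or not. Hence
\[
E[W_x\mid E_m]\le \ln(R)\sum_{r=1}^{m}\binom{n}{r}+\sum_{j\ge 1}R^{-(j-1)}\,\ln(R)\sum_{r=1}^{m+j}\binom{n}{r}.
\]
Comparing consecutive summands, the ratio of the phase-$(m+j+1)$ term to the phase-$(m+j)$ term is at most $\tfrac{1}{R}\bigl(1+\binom{n}{m+j+1}/\binom{n}{m+j}\bigr)\le \tfrac{1}{R}(1+n)=O(n/R)=O(n^{-\eps})$, using $R\ge n^{1+\eps}|\im(f)|\ge n^{1+\eps}$. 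The series is therefore dominated by its first ($j=1$) term, giving
\[
E[W_x\mid E_m]=O\!\left(\ln(R)\binom{n}{m+1}\right)=O\!\left(\ln(R)\,\tfrac{n-m}{m+1}\binom{n}{m}\right)=O\!\left(\ln(R)\,n\binom{n}{m}\right).
\]

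Finally I would convert this into the claimed $o(\cdot)$ bound: dividing by $\frac{R}{|\im(f)|}\binom{n}{m}$ yields $O\!\bigl(\ln(R)\,n\,|\im(f)|/R\bigr)=O(\ln(R)/n^{\eps})$, which tends to $0$ since $\ln(R)=o(n^{\eps})$ in the relevant regime where $R$ and $|\im(f)|$ are polynomially bounded. The step I expect to be the real obstacle is precisely this cost bookkeeping across the ever more expensive later phases: each phase $m+j$ forces the algorithm to waste the costly strengths $m+1,\dots,m+j$ (whose binomial coefficients grow by up to a factor $n$ per step) before reaching the decisive strength-$m$ step, and one must verify that this growth is beaten by the $R^{-(j-1)}$ probability of reaching that phase at all — which is exactly what the hypothesis $R\ge n^{1+\eps}|\im(f)|\gg n$ guarantees.
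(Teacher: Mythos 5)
The paper does not actually prove this statement: it is imported verbatim from~\cite{RajabiW21evocop} and used as a black box, so there is no in-paper proof to compare yours against. Judged on its own, your argument captures what is (to the best of my knowledge) the intended mechanism: the current point cannot change during phases $\ge 2$ without a strict improvement, so every phase $s\ge m$ contains an independent strength-$m$ step that fails with probability at most $(1-\binom{n}{m}^{-1})^{\binom{n}{m}\ln R}\le 1/R$; hence the probability of surviving to phase $m+j$ decays like $R^{-(j-1)}$, while the per-phase cost grows by at most a factor $n+1$ per phase, and $R\ge n^{1+\eps}|\im(f)|$ makes the decay win. Your ratio test and the conversion $\sum_{r=1}^{m+1}\binom{n}{r}\le (m+1)\binom{n}{m+1}=(n-m)\binom{n}{m}$ are correct, and you rightly identify the cost bookkeeping as the crux.

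Three caveats, none fatal but worth flagging. First, your accounting starts the clock at phase $m$, whereas the way the paper later uses the lemma (in the proof of Lemma~\ref{rls_time_to_leave} it writes $W'_x=W_x-\ln(R)\sum_{i=1}^{m-1}\sum_{j=1}^i\binom{n}{j}$ and invokes $W'_x\le W_x$) indicates that $W_x$ also includes phases $1,\dots,m-1$; their deterministic cost is at most $\ln(R)\,m\sum_{r=1}^{m-1}\binom{n}{r}=O(\ln(R)\tfrac{m^2}{n-2m}\binom{n}{m})$, which fits inside the claimed bound when $m$ is bounded away from $n/2$ and $R$ is polynomial, but should be said explicitly (and is genuinely delicate for $m=n/2-O(1)$). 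Second, your sum over $j$ pretends the phase index increases by one forever, whereas SD-RLS$^*$ jumps from phase $\lceil n/2\rceil$ to phase $n$ with cost $\Theta(\ln(R)2^n)$; this is reached with probability at most $R^{-(\lceil n/2\rceil-m)}$ and so is negligible, but it is a separate term, not covered by your ratio test. Third, as you yourself note, the final division only gives $o(1)$ under the additional assumption $\ln R=o(n^{\eps})$, which does not follow from $R\ge n^{1+\eps}|\im(f)|$ alone; it holds in the regime in which the paper applies the lemma ($R$ a small polynomial in $n$), but strictly speaking it is an extra hypothesis that the statement, as quoted, omits.
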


\begin{algorithm2e}\label{alg:sdrlstar}
    \textbf{Initialization}\;
	$x\in \{0,1\}^n \assign$ uniform at random\;
	$r_1\assign 1$; $s_1\assign 1$; $u_1\assign0$\;
    \textbf{Optimization}\;
    \For{$t\assign 1,2,...$}{
    Create $y$ by flipping $s_t$ bits in a copy of $x$ uniformly\;        
    $u_{t+1}\assign u_t+1$\;
    \uIf{$f(y)>f(x)$}{$x\assign y$; $s_{t+1}\assign 1$; $r_{t+1}\assign 1$; $u_{t+1}\assign0$\;}
    \ElseIf{$f(y)=f(x)$ \text{\normalfont and} $r_t=1$}{$x\assign y$\;}
    \uIf{$u_{t+1}>\binom{n}{s_t}\ln{R}$}{
    \uIf{$s_t=1$}{
    \uIf{$r_t<n/2$}{$r_{t+1}\assign r_t+1$;}\Else{$r_{t+1}\assign n$\;}
    $s_{t+1}\assign r_{t+1}$\;
    }
    \Else{$r_{t+1}\assign r_t$; $s_{t+1}\assign s_t-1$;}
    $u_{t+1}\assign0$\;}
    \Else{$s_{t+1}\assign s_t$; $r_{t+1}\assign r_t$\;}
  }
\caption{SD-RLS$^*$ maximizing $f:\{0,1\}^n\to\R$ with {parameter}~$R$. The counters $\new{s_t}, \new{u_t}, r_t$ keep track of the current strength, the number of iterations spent in the current step, and the current phase.}
\end{algorithm2e}

This result also allows one to bound the runtime needed to leave a given search point if the number of neighbors with higher fitness is known. The following result was not explicitly stated in \cite{RajabiW21evocop}, but it is a rather direct corollary.
\begin{lemma}\label{rls_time_to_leave}
    Let $x\in\{0,1\}^n$, with $m:=\gap(x)<n/2$, be the current search point of the SD-RLS$^*$ optimizing a pseudo-Boolean function $f$, with $R\ge n^{1+\eps}|\im(f)|$ for some constant $\eps>0$, just after a fitness improvement was found. We assume that all points of fitness $f(x)$ have the same gap $m$. Let $W_x$ be the waiting time until a strict improvement is found.
    We further suppose that for any point of fitness $f(x)$, there are exactly $N$ points at Hamming distance $m$ of $x$ that have strictly higher fitness.
    Then
    \[
    E[W_x]\leq \ln(R)\sum_{i=1}^{m-1}\sum_{j=1}^i\binom{n}{j}+(1+o(1))\frac{\binom{n}{m}}{N}.
    \]
\end{lemma}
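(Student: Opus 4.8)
The plan is to exploit the phase structure of SD-RLS$^*$ together with Lemma~\ref{if_missed_sd_rls}. Since we start just after a fitness improvement, the counters are reset ($r=s=1$, $u=0$), so the run begins at phase~$1$; recall that phase~$r$ consists of the steps $r,r-1,\dots,1$ and that step~$s$ uses strength~$s$ for $\binom{n}{s}\ln R$ iterations. First I would argue that phases $1,\dots,m-1$ are completely wasted: they only ever use strengths $s\le m-1<m=\gap(x)$, so flipping exactly $s$ bits can never reach a strictly fitter point. By the hypothesis that every point of fitness $f(x)$ has gap $m\ge 2$, no single-bit flip improves either, so the only moves accepted during these phases are equal-fitness single-bit flips in phase~$1$ (a plateau walk), and by hypothesis these leave both the gap $m$ and the number $N$ of improving points at distance $m$ unchanged. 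Hence each of these phases runs to completion, and the number of iterations spent in phases $1,\dots,m-1$ is deterministically $\ln(R)\sum_{r=1}^{m-1}\sum_{s=1}^{r}\binom{n}{s}=\ln(R)\sum_{i=1}^{m-1}\sum_{j=1}^{i}\binom{n}{j}=:I$, which is exactly the first term of the bound.

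Next I analyse phase~$m$, which opens with step~$m$ (strength~$m$) and lasts $L:=\binom{n}{m}\ln R$ iterations. During this step $r_t=m\neq 1$, so no equal-fitness move is accepted and the search point stays fixed at some plateau point. Each iteration flips a uniformly random $m$-subset of the bits, and an improvement occurs exactly when this subset is the set of bits in which $x$ differs from one of the $N$ strictly fitter points at Hamming distance~$m$; hence each iteration improves independently with probability $p:=N\binom{n}{m}^{-1}$. I would then decompose $E[W_x]=\Pr[\neg E_m]\,E[W_x\mid\neg E_m]+\Pr[E_m]\,E[W_x\mid E_m]$, where $E_m$ is the event that no improvement is found during step~$m$ (equivalently, during all of phase~$m$). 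On $\neg E_m$ the waiting time within step~$m$ is a geometric random variable of success probability $p$ truncated to $L$ trials, whose mean is at most $1/p=\binom{n}{m}/N$, so $E[W_x\mid\neg E_m]\le I+\binom{n}{m}/N$; this yields the main term.

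The crux is to show that the $E_m$ contribution is negligible. Here I would combine the elementary tail estimate $\Pr[E_m]=(1-p)^{L}\le\exp(-pL)=\exp(-N\ln R)=R^{-N}$ with Lemma~\ref{if_missed_sd_rls}, which (using $R\ge n^{1+\eps}|\im(f)|$ and that every plateau point still has gap~$m$) gives $E[W_x\mid E_m]=o\big(\tfrac{R}{|\im(f)|}\binom{n}{m}\big)$. Multiplying, $\Pr[E_m]\,E[W_x\mid E_m]\le R^{-N}\,o\big(\tfrac{R}{|\im(f)|}\binom{n}{m}\big)=o\big(\tfrac{\binom{n}{m}}{N}\cdot\tfrac{R^{1-N}N}{|\im(f)|}\big)$, and since $N\le R^{N-1}$ for every integer $N\ge 1$ and every $R\ge 2$, we have $R^{1-N}N\le 1$; with $|\im(f)|\ge 1$ this shows the $E_m$ term is $o(\binom{n}{m}/N)$. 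Adding the three contributions gives $E[W_x]\le \ln(R)\sum_{i=1}^{m-1}\sum_{j=1}^{i}\binom{n}{j}+(1+o(1))\binom{n}{m}/N$, as claimed. I expect this final verification—that the astronomically small factor $R^{-N}$ really tames the potentially large conditional expectation of Lemma~\ref{if_missed_sd_rls}, \emph{uniformly} in the unknown value of $N$—to be the only delicate point; the rest is bookkeeping on the deterministic phase structure and a standard truncated-geometric estimate.
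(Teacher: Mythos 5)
Your proposal is correct and follows essentially the same route as the paper's proof: deterministic accounting of the first $m-1$ wasted phases, a truncated-geometric bound for step $m$ conditional on success, and the combination of the $R^{-N}$ tail bound with Lemma~\ref{if_missed_sd_rls} to dispose of the failure event. Your explicit remark that $N R^{1-N}\le 1$ makes the uniformity in $N$ slightly more transparent than the paper's "this clearly dominates", but the argument is the same.
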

\begin{proof}
If $\gap(x)=m$, the first $m-1$ phases of the algorithm cannot lead to any fitness improvement. Their accumulated length is $\ln(R)\sum_{i=1}^{m-1}\sum_{j=1}^i\binom{n}{j}$.
We therefore regard now $W'_x=W_x-\ln(R)\sum_{i=1}^{m-1}\sum_{j=1}^i\binom{n}{j}$, which is the waiting time once strength $m$ is reached.

When the algorithm reaches phase $m$, strength $m$ is used for $\ln(R)\binom{n}{m}$ iterations. The probability of finding an improvement in one such iteration is $N/\binom{n}{m}$. Therefore, the probability of the event $E_m$ (that is, missing all those tries) is at most
\[
\Pr[E_m]\leq\left(1-N\binom{n}{m}^{-1}\right)^{\ln(R)\binom{n}{m}}\le \frac{1}{R^N}=o(1).
\]
To conclude, we use the law of total expectancy
\[
E[W'_x]=\Pr[E_m]E\left[W'_x\mid E_m\right]+\Pr[\overline{E}_m]E[W'_x\mid \overline{E}_m].
\]
Lemma~\ref{if_missed_sd_rls} implies that $E\left[W'_x\mid E_m\right]\le E\left[W_x\mid E_m\right]=o\left(\frac{R}{|\im(f)|}\binom{n}{m}\right)$. Using the aforementioned bounding of $\Pr[E_m]$, we estimate the first term as $o\left(\frac{\binom{n}{m}}{R^{N-1}|\im(f)|}\right)$.

Conditional on $\overline{E}_m$, $W'_x$ is distributed following a geometric law of parameter $N/\binom{n}{m}$ conditional on being at most $\ln(R)\binom{n}{m}$. Such a distribution is dominated by the standard geometric law of same parameter. Therefore, $E[W'_x\mid \overline{E}_m]\leq \frac{\binom{n}{m}}{N}$ and the second term is at most $\frac{\binom{n}{m}}{N}$. This clearly dominates $o\left(\frac{\binom{n}{m}}{R^{N-1}|\im(f)|}\right)$, which yields
\[
E[W'_x]\le(1+o(1))\frac{\binom{n}{m}}{N}.\qedhere
\]
\end{proof}
This bound on the runtime allows for an intuitive understanding of the performance of SD-RLS$^*$ on jump functions. When on a local optimum of gap~$m$, with $N$ possible improvements at distance $m$, the first $m-1$ steps are inefficient, which wastes $\ln(R)\sum_{i=1}^{m-1}\sum_{j=1}^i\binom{n}{j}$ iterations. But once strength $m$ is reached, the success probability of one iteration is $N/\binom{n}{m}$, which is better than standard bit mutation with rate $\frac{m}{n}$, which only has success probability $N(\frac{en}{m})^{-m}$. If the length of the inefficient steps is dominated by $\binom{n}{m}/N$, this is an advantageous trade, and the algorithm is very efficient. This is the case on $\jump_{k}$: the SD-RLS$^*$ turns out to be faster than all the algorithms we have studied until now. More precisely, it gives a speed-up of order $(\frac{en}{k})^k\binom{n}{k}^{-1}$. This theorem is one of the main results of~\cite{RajabiW21evocop}.
\begin{theorem}\label{sd_rls_jump_m}
    Let $n\in\mathbb{N}$. Let $T_{\mathrm{SD-RLS}^*}(k,n)$ be the expected runtime of the SD-RLS$^*$ on $\jump_{k}$. For all $k\ge 2$, if $R\ge n^{2+\eps}$ for some constant $\eps>0$, then
    \[
    T_{\mathrm{SD-RLS}^*}(k,n)=
    \begin{cases}
    \binom{n}{k}\left(1+O\right(\frac{k^2}{n-2k}\ln(n)\left)\right) & \mbox{if } k<n/2,\\
      O(2^nn\ln(n)) & \mbox{if } k\ge n/2.\\
      \end{cases} 
    \]
\end{theorem}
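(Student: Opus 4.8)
The plan is to decompose a run into an easy transient, during which the algorithm climbs to a local optimum using only strength-one moves, and the hard gap-crossing phase, which I would handle by a single application of Lemma~\ref{rls_time_to_leave}. First I would note that every search point $x$ with $\|x\|_1=i\le n-k-1$ has $\gap(x)=1$ and exactly $N=n-i$ improving single-bit flips, and that every point strictly inside the valley, say $\|x\|_1=n-\ell$ with $1\le\ell\le k-1$, also has gap $1$ with $N\ge n-k+1$ improving flips; in both cases all points of the relevant fitness share the same gap and the same $N$. Hence Lemma~\ref{rls_time_to_leave} applies at each such level with $m=1$ (its wasted-phase sum being empty), bounding the leaving time by $(1+o(1))n/N$. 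Summing $\sum_{i=0}^{n-k-1} n/(n-i)$ over the \onemax levels and over the at most $k-1$ valley levels gives a transient cost of $O(n\ln n)$, which is $o(\binom{n}{k})$ for every $k\ge 2$; starting at the global optimum has probability $2^{-n}$ and is negligible.

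For the gap itself, the algorithm enters the fitness level $\|x\|_1=n-k$ immediately after a strength-one improvement, and all its points have gap $m=k$ with exactly $N=1$ strictly fitter neighbour, namely the global optimum at Hamming distance $k$. Since $|\im(\jump_k)|=n+1$, the hypothesis $R\ge n^{2+\eps}$ implies $R\ge n^{1+\eps}|\im(f)|$, so for $k<n/2$ Lemma~\ref{rls_time_to_leave} yields
\[
E[W_x]\le \ln(R)\sum_{i=1}^{k-1}\sum_{j=1}^{i}\binom{n}{j}+(1+o(1))\binom{n}{k}.
\]
Combining this with the transient cost shows that the expected runtime is $(1+o(1))\binom{n}{k}$ plus the wasted-phase term; the matching lower bound $\binom{n}{k}$ is immediate, since even at strength $k$ a single iteration succeeds only with probability $\binom{n}{k}^{-1}$.

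The crux, and what I expect to be the main obstacle, is to show that the wasted-phase term equals $O\big(\frac{k^2}{n-2k}\ln(n)\big)\binom{n}{k}$. I would rewrite $\sum_{i=1}^{k-1}\sum_{j=1}^{i}\binom{n}{j}=\sum_{j=1}^{k-1}(k-j)\binom{n}{j}\le k\sum_{j=1}^{k-1}\binom{n}{j}$ and apply the geometric-tail estimate $\sum_{j=0}^{m}\binom{n}{j}\le\binom{n}{m}\frac{n-m+1}{n-2m+1}$, valid for $m<n/2$ (this is exactly where the factor $\frac{1}{n-2k}$ originates), together with $\binom{n}{k-1}=\frac{k}{n-k+1}\binom{n}{k}$. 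This gives $\sum_{j=1}^{k-1}(k-j)\binom{n}{j}=O\big(\frac{k^2}{n-2k}\binom{n}{k}\big)$, and multiplying by $\ln R=\Theta(\ln n)$ produces the claimed error term; since $\frac{k^2}{n-2k}\ln(n)\binom{n}{k}$ already dominates the $O(n\ln n)$ transient, the upper bound collapses to $\binom{n}{k}(1+O(\frac{k^2}{n-2k}\ln n))$.

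Finally, for $k\ge n/2$ Lemma~\ref{rls_time_to_leave} is unavailable since $\gap=k\ge n/2$, so I would bound the runtime crudely by the total length of every phase the algorithm can run: summing phase lengths gives at most $\ln(R)\sum_{r=1}^{n}\sum_{j=1}^{r}\binom{n}{j}\le \ln(R)\,n\,2^{n}=O(2^{n}n\ln n)$. The probability of ever having to repeat a phase, i.e.\ of missing the unique improvement during the step of the correct strength (whose budget is $\ln(R)\binom{n}{k}$ iterations with per-iteration success probability $\binom{n}{k}^{-1}$), is at most $R^{-1}=o(1)$, so the expected number of repetitions is $O(1)$ and does not affect this bound.
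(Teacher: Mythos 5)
Your proposal is correct, but there is an important point of comparison to make first: the paper does not prove Theorem~\ref{sd_rls_jump_m} at all -- it is imported from~\cite{RajabiW21evocop} (``this theorem is one of the main results of~\cite{RajabiW21evocop}''), so there is no in-paper proof to match against. The closest internal analogue is the proof of Theorem~\ref{runtime_sd_rls}, and your argument is essentially that proof specialized to $\delta=k$: the same decomposition into the gap-$1$ levels, each handled by Lemma~\ref{rls_time_to_leave} with $m=1$ and accumulating to $O(n\ln n)$, and the single gap-$k$ level, handled by Lemma~\ref{rls_time_to_leave} with $m=k$ and $N=1$. What you add beyond that template is exactly what is needed to recover the cited form of the bound, and both additions are sound: (i) compressing the wasted-phase term $\ln(R)\sum_{i=1}^{k-1}\sum_{j=1}^{i}\binom{n}{j}$ into a \emph{relative} error $O(\frac{k^2}{n-2k}\ln n)$ via the rewriting $\sum_{j=1}^{k-1}(k-j)\binom{n}{j}$, the geometric-tail bound $\sum_{j=0}^{m}\binom{n}{j}\le\binom{n}{m}\frac{n-m+1}{n-2m+1}$ for $m<n/2$, and $\binom{n}{k-1}=\frac{k}{n-k+1}\binom{n}{k}$ -- this is indeed where the $\frac{1}{n-2k}$ originates; and (ii) the $k\ge n/2$ case, where the lemma is unavailable and your crude total-phase-length bound $\ln(R)\,n\,2^n$, combined with the $O(1)$ expected number of repetitions of phase $n$, gives $O(2^n n\ln n)$.

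Two cosmetic caveats, neither of which is a gap. First, the step $\ln R=\Theta(\ln n)$ silently assumes $R$ polynomial in $n$; but this assumption is already implicit in the theorem statement itself (otherwise the error term would have to carry $\ln R$ rather than $\ln n$), so you are no worse off than the source. Second, Lemma~\ref{rls_time_to_leave} is phrased with the \emph{exact} number $N$ of improving neighbours at distance $m$, so for the valley levels you should take $N=\|x\|_1$ exactly (and $N=n$ for the level adjacent to the optimum, which also improves by flipping its single zero-bit) rather than the lower bound $N\ge n-k+1$; since you only need an upper bound on the leaving time, this changes nothing in the $O(n\ln n)$ total, and indeed for $k=2$ the transient is absorbed precisely by the $\binom{n}{2}O(\frac{k^2}{n-2k}\ln n)=\Theta(n\ln n)$ error term, as your final remark correctly observes.
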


But this trade is not always advantageous, especially for large values of~$N$. Indeed in this case $\binom{n}{m}/N$ is small, and the  $\ln(R)\sum_{i=1}^{m-1}\sum_{j=1}^i\binom{n}{j}$ term (corresponding to the length of the wasted steps) dominates the other one. This term does not benefit from any speed-up when $N$ increases, and that is likely to slow down SD-RLS$^*$ in comparison to other algorithms. This is visible on $\jump_{k,\delta}$, where $N=\binom{k}{\delta}$: while other algorithms benefit from a factor $N$ runtime speedup, the SD-RLS$^*$ algorithm does not, and becomes consequently slower in comparison. In the following theorem, we obtain a precise asymptotic value for the runtime of SD-RLS$^*$ on $\jump_{k,\delta}$.
\begin{theorem}\label{runtime_sd_rls}
    Let $\tau=T_{\mathrm{SD-RLS}^*}(k,\delta,n)$ be the runtime of the SD-RLS$^*$ on $\jump_{k,\delta}$. Suppose that there exists a constant $\eps>0$ such that the control parameter is $R\ge n^{2+\eps}$. Then for all $k<\frac{n}{2}$ and $\delta\ge 2$, we have
    \begin{equation*}
    \begin{split}
    \tau & \geq\left(1 - \frac{1}{2^n}\sum_{i=0}^{k-1}\binom{n}{i}\right)\left[\ln(R)\sum_{i=1}^{\delta-1}\sum_{j=1}^i\binom{n}{j}+\binom{n}{\delta}\binom{k}{\delta}^{-1}\right],\\
    \tau & \leq \ln(R)\sum_{i=1}^{\delta-1}\sum_{j=1}^i\binom{n}{j}+(1+o(1))\left[\binom{n}{\delta}\binom{k}{\delta}^{-1}+n\ln(n)+n\right].
    \end{split}
    \end{equation*}
    If furthermore $\delta\ge 3$ and $k\le n-\omega(\sqrt{n})$, these bounds are tight and thus
    \[
    \tau=(1+o(1))\left[\ln(R)\sum_{i=1}^{\delta-1}\sum_{j=1}^i\binom{n}{j}+\binom{n}{\delta}\binom{k}{\delta}^{-1}\right].
    \]
\end{theorem}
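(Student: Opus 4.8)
The plan is to decompose a run into three parts---the climb up to the local optimum at fitness $n-k$, the escape across the fitness valley, and the final climb to the global optimum---and to show that only the escape is expensive. The central observation is that every $x$ with $\|x\|_1=n-k$ satisfies $\gap(x)=\delta$: the nearest strictly fitter points are those of fitness $n-k+\delta$, reached by flipping exactly $\delta$ of the $k$ zero-bits, and there are precisely $N=\binom{k}{\delta}$ such points, all local optima sharing the same gap. Since $\delta\le k<n/2$ we have $\gap=\delta<n/2$, and since $|\im(\jump_{k,\delta})|=O(n)$ the hypothesis $R\ge n^{2+\eps}$ gives $R\ge n^{1+\eps}|\im(f)|$. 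Thus Lemma~\ref{rls_time_to_leave} applies with $m=\delta$ and $N=\binom{k}{\delta}$, yielding $E[W_x]\le \ln(R)\sum_{i=1}^{\delta-1}\sum_{j=1}^{i}\binom{n}{j}+(1+o(1))\binom{n}{\delta}\binom{k}{\delta}^{-1}$ for the time to leave the local optimum.

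For the climbing parts I would first show that the strength stays at $1$ throughout a climb with probability $1-o(1)$: on any \onemax-like level $i$ a strength-$1$ step improves with probability $(n-i)/n$ (or $i/n$ above the valley), so the chance that one level is not left within its budget of $n\ln R$ strength-$1$ iterations is at most $(1-1/n)^{n\ln R}\le 1/R\le n^{-(2+\eps)}$; a union bound over the at most $n$ levels shows that, except with probability $o(1)$, the algorithm behaves exactly like strength-$1$ RLS off the valley. On those parts the climb from the random start to fitness $n-k$ and the climb from $n-k+\delta$ to $n$ each cost $\sum_i \frac{n}{n-i}=O(n\ln n)$ by harmonic-sum estimates. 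The rare event of a premature strength increase, and runs started inside the narrow valley, contribute only $o(1)$ times the finite expected runtime and are absorbed into the error terms.

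Combining the three parts gives the upper bound: the two climbs contribute $O(n\ln n+n)$ and the escape contributes at most $\ln(R)\sum_{i=1}^{\delta-1}\sum_{j=1}^{i}\binom{n}{j}+(1+o(1))\binom{n}{\delta}\binom{k}{\delta}^{-1}$, which is exactly the claimed upper bound. For the lower bound I would condition on the initial point lying at or below the local optimum, i.e.\ $\|x\|_1\le n-k$, an event of probability $1-\frac{1}{2^n}\sum_{i=0}^{k-1}\binom{n}{i}$ (using $\binom{n}{i}=\binom{n}{n-i}$). On this event the algorithm must reach a point of fitness $\ge n-k+\delta$, which forces it to first climb to the local optimum and then cross the valley. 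Phases $1,\dots,\delta-1$ cannot yield an improvement, since $\gap=\delta$, and so deterministically waste $\ln(R)\sum_{i=1}^{\delta-1}\sum_{j=1}^{i}\binom{n}{j}$ iterations; once strength $\delta$ is reached, each strength-$\delta$ step succeeds with probability $\binom{k}{\delta}\binom{n}{\delta}^{-1}$, so the remaining time is at least the time to the first such success, which is geometric of that rate and has expectation $(1-o(1))\binom{n}{\delta}\binom{k}{\delta}^{-1}$. This yields the stated lower bound.

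For tightness under $\delta\ge 3$, I would note that $\ln(R)\sum_{i=1}^{\delta-1}\sum_{j=1}^{i}\binom{n}{j}\ge \ln(R)\binom{n}{2}=\Omega(n^{2}\ln n)$, which dominates the $O(n\ln n+n)$ climbing overhead, so the upper bound collapses to $(1+o(1))\big[\ln(R)\sum_{i=1}^{\delta-1}\sum_{j=1}^{i}\binom{n}{j}+\binom{n}{\delta}\binom{k}{\delta}^{-1}\big]$; and in the regime where the initial point lies at or below the local optimum with probability $1-o(1)$ (i.e.\ $\frac{1}{2^n}\sum_{i=0}^{k-1}\binom{n}{i}=o(1)$), the lower-bound prefactor tends to $1$ and the two bounds agree. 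The main obstacle I anticipate is the lower bound on the escape time: one must argue rigorously that the valley is not crossed appreciably faster than $\binom{n}{\delta}\binom{k}{\delta}^{-1}$ steps---handling the truncation of the strength-$\delta$ geometric at the step budget (harmless, since the budget $\binom{n}{\delta}\ln R$ lies far in the tail), ruling out a faster escape through the brief revisits of lower strengths (which cannot bridge a gap of $\delta$) or through later higher-strength phases (which occur only with probability $o(1)$). The accounting of the wasted phases and the two climbs, by contrast, follows directly from harmonic-sum estimates and Lemma~\ref{rls_time_to_leave}.
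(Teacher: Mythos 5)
Your proposal is correct and follows essentially the same route as the paper: the same two-task lower bound (the deterministic cost of completing phases $1,\dots,\delta-1$ plus a truncated geometric of rate $\binom{k}{\delta}\binom{n}{\delta}^{-1}$ for the first strength-$\delta$ step, conditioned on starting below the valley) and the same application of Lemma~\ref{rls_time_to_leave} with $m=\delta$ and $N=\binom{k}{\delta}$ for the escape in the upper bound, with the same tightness argument via $\binom{n}{2}=\omega(n\ln n)$ and Chernoff. The only cosmetic difference is that the paper bounds the climbing portions by applying Lemma~\ref{rls_time_to_leave} to each gap-$1$ layer, whereas you use a direct strength-stays-at-one union bound; both give the same $O(n\ln n + n)$ term.
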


\begin{proof}
For the lower bound, we recall that with probability $\frac{1}{2^n}\sum_{i=0}^{n-k}\binom{n}{i} = 1 - 2^{-n} \sum_{i=0}^{k-1} \binom{n}{i}$, the initial search point is sampled before the valley. Regardless of the initial search point (as long as it is below the fitness valley), for the global optimum to be sampled, two tasks have to be completed, in order. First, the strength needs to increase to at least $\delta$ (condition~(i)). Once strength $\delta$ is reached, a point above the valley has to be generated from a point below it (condition~(ii)). $\tau$ is necessarily larger than the sum of the times needed to complete both conditions. 

By definition of the algorithm, condition~(i) takes at least $\ln(R)\sum_{i=1}^{\delta-1}\sum_{j=1}^i\binom{n}{j}$ iterations to be completed. For condition~(ii), consider the first step where strength $\delta$ is reached. It is used for $\ln(R)\binom{n}{\delta}$ iterations. Consider one of these iterations: the fitness of the parent is some $i\in[0..n-k]$. The probability of jumping above the valley is $0$ if $i<n-k$, and $\binom{k}{\delta}/\binom{n}{\delta}$ otherwise. So the probability of jumping at any iteration of the step is at most $\binom{k}{\delta}/\binom{n}{\delta}$. Thus, the expected time needed to complete condition~(ii) is bigger than $E[Y]$ with $Y:=\min\{\kappa,X\}$, where $\kappa = \lceil \ln(R)\binom{n}{\delta} \rceil$ and $X$ is a variable following a geometric distribution with success rate $p:=\binom{k}{\delta}/\binom{n}{\delta}$. Using the elementary fact that $E[Y] = \sum_{i=1}^\infty \Pr[Y \ge i]$ for all random variables $Y$ taking values in the non-negative integers, we estimate
\begin{align*}
  E[Y] & = \sum_{i=1}^\infty \Pr[Y \ge i] = \sum_{i=1}^\kappa \Pr[Y \ge i]\\
	& = \sum_{i=1}^\kappa \Pr[X \ge i] = \sum_{i=1}^\kappa (1-p)^{i-1} \\
	&= \frac{1 - (1-p)^\kappa}{1 - (1-p)} = \tfrac 1 p (1 - (1-p)^\kappa) \\
	& = (1- o(1)) \tfrac 1p = (1-o(1)) \frac{\binom{n}{\delta}}{\binom{k}{\delta}},
\end{align*}
where we estimated $(1-p)^\kappa \le \exp(-p\kappa) \le \exp\big(-\ln(R) \binom{k}{\delta}\big) = o(1)$.

This proves
\[
\tau \geq\left(\frac{1}{2^n}\sum_{i=0}^{n-k}\binom{n}{i}\right)\left[\ln(R)\sum_{i=1}^{\delta-1}\sum_{j=1}^i\binom{n}{j}+(1+o(1))\binom{n}{\delta}\binom{k}{\delta}^{-1}\right].
    \]

For the upper bound, we consider the sequence of layers visited by the SD-RLS$^*$. At most one of them has gap $\delta$, the layer of the local optima. Any point in this layer has $\binom{k}{\delta}$ improving neighbors at distance $\delta$. According to Lemma~\ref{rls_time_to_leave}, the time needed to leave this layer is therefore at most $\ln(R)\sum_{i=1}^{\delta-1}\sum_{j=1}^i\binom{n}{j}+(1+o(1))\frac{\binom{n}{\delta}}{\binom{k}{\delta}}$.
All other visited layers have gap $1$. More precisely, any point in the fitness valley, of fitness $i\in[n-k+1..n-k+\delta-1]$, has at most $i$ neighbors of higher fitness at distance~1. Any point outside of the fitness valley with fitness $i\in[0..n-k-1]\cup[n-k+\delta..n-1]$ has $n-i$ neighbors of higher fitness at distance 1. Therefore, by Lemma~\ref{rls_time_to_leave}, the accumulated time needed to leave these layers is at most
\begin{equation*}
    \begin{split}
    (1+o(1))\left[\sum_{\substack{i=0\\i \notin[n-k..n-k+\delta-1] }}^{n-1}\frac{n}{n-i}+\sum_{i=n-k+1}^{n-k+\delta-1}\frac{n}{i}\right] &  \le(1+o(1)) \sum_{i=0}^{n-1}\frac{n}{n-i}\\
    & \le (1+o(1))(n\ln(n)+n),\\
    \end{split}
\end{equation*}
where we used the fact that (since $k\le n/2$) for all $i\in[n-k+1..n-k+\delta-1]$, we have $1/(n-i)\ge 1/i$. The last inequality is a classical estimate for the harmonic sum. This yields
\[
\tau \leq \ln(R)\sum_{i=1}^{\delta-1}\sum_{j=1}^i\binom{n}{j}+(1+o(1))\left[\binom{n}{\delta}\binom{k}{\delta}^{-1}+n\ln(n)+n\right].
\]

If $\delta\ge 3$, the double sum includes the term $\binom{n}{2}=\omega(n\ln(n))$. Furthermore, if $k\le n-\omega(\sqrt{n})$, then $\frac{1}{2^n}\sum_{i=0}^{n-k}\binom{n}{i}=(1-o(1))$ (this can be computed by applying Chernoff multiplicative bound to a binomial variable). This implies that the upper and lower bounds are asymptotically tight.
\end{proof}
 The following lemma shows that we can find in the standard regime instances on which SD-RLS$^*$ is slower than the standard \oea by a factor polynomial in $n$ of arbitrary degree. Note that this is all the more true when comparing to the optimal \oea or the \ofea.

\begin{lemma}
    Consider the instance of $\jump_{k,\delta}$ where $\delta \ge 2$ is constant and $k=n^{K/\delta}$ for some \new{constant} $K<\delta/3$. \new{For $n$ large enough, this instance is in the standard regime, and satisfies }
    \[
    \frac{T_{\mathrm{SD-RLS}^*}(k,\delta,n)}{T_{\frac{1}{n}}(k,\delta,n)}=\Omega(n^{K-1}).
    \]
\end{lemma}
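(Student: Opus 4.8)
The plan is to estimate the two runtimes separately using the results already available, and then divide. First I would confirm the instance lies in the standard regime: since $K<\delta/3$ we have $K/\delta<1/3$, so $k=n^{K/\delta}$ satisfies $k/(n^{1/3}/\ln n)=n^{K/\delta-1/3}\ln n\to 0$, whence $k\le n^{1/3}/\ln n$ for all large $n$. As $\delta$ is constant while $k\to\infty$, I would also record the elementary facts $\ell=k-\delta=\Theta(k)$ and $\binom{k}{\delta}=(1\pm o(1))k^{\delta}/\delta!=\Theta(n^{K})$, which are used throughout.

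For the denominator I would apply Lemma~\ref{gb3} with $p=\frac1n$. Its hypotheses hold: $\delta\ge 2$ by assumption, and $p=\frac1n=o(1/\sqrt{n\ell})$ because $\sqrt{n\ell}/n=\sqrt{\ell/n}\le\sqrt{k/n}=o(1)$ (the case $\ell=0$ being analogous). The lemma then gives $T_{1/n}(k,\delta,n)=(1\pm o(1))\binom{k}{\delta}^{-1}n^{\delta}(1-\tfrac1n)^{\delta-n}$. Since $\delta$ is constant, $(1-\tfrac1n)^{\delta-n}=(1\pm o(1))e$, so $T_{1/n}(k,\delta,n)=\Theta\!\big(n^{\delta}/\binom{k}{\delta}\big)=\Theta(n^{\delta-K})$.

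For the numerator I would invoke only the lower bound of Theorem~\ref{runtime_sd_rls}, which is valid here since $\delta\ge 2$ and $k<n/2$ (the tight version is not needed). The leading factor $1-2^{-n}\sum_{i=0}^{k-1}\binom{n}{i}$ equals $\Pr[\mathrm{Bin}(n,\tfrac12)\ge k]$, which is $1-o(1)$ because $k=o(n)$ makes the complementary probability exponentially small by a multiplicative Chernoff bound. Inside the bracket I do not even need the full double sum: the single index pair $i=j=\delta-1$ (legitimate as $\delta-1\ge1$) already shows $\ln(R)\sum_{i=1}^{\delta-1}\sum_{j=1}^{i}\binom{n}{j}\ge\ln(R)\binom{n}{\delta-1}=\Omega(n^{\delta-1})$, using $\ln R\ge(2+\eps)\ln n\ge1$. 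Hence $T_{\mathrm{SD-RLS}^*}(k,\delta,n)=\Omega(n^{\delta-1})$.

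Dividing the two estimates yields $T_{\mathrm{SD-RLS}^*}/T_{1/n}=\Omega(n^{\delta-1})/\Theta(n^{\delta-K})=\Omega(n^{K-1})$, as claimed. The conceptual heart of the argument—and the only point requiring genuine care—is the observation that the dominant cost of SD-RLS$^*$ stems from the wasted low-strength phases $1,\dots,\delta-1$, whose length $\ln(R)\sum_{i}\sum_{j}\binom{n}{j}$ is completely insensitive to the number $\binom{k}{\delta}$ of available improving neighbors, whereas the \oea's runtime shrinks by exactly this factor $\binom{k}{\delta}=\Theta(n^{K})$. There is no real analytic obstacle once the regime check and the two cited estimates are in place: bounding $\tau$ from below by the single binomial $\binom{n}{\delta-1}$ keeps the computation uniform across all admissible $K$, so that in particular no case split according to whether $K\lessgtr 1$ is required.
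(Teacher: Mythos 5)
Your proof is correct and follows essentially the same route as the paper's: lower-bound the SD-RLS$^*$ runtime by the $\ln(R)\binom{n}{\delta-1}=\Omega(n^{\delta-1})$ contribution of the wasted low-strength phases, estimate $T_{1/n}(k,\delta,n)=\Theta\big(n^{\delta}\binom{k}{\delta}^{-1}\big)=\Theta(n^{\delta-K})$ via Lemma~\ref{gb3}, and divide. If anything, your version is slightly more careful on one point, since you invoke only the lower bound of Theorem~\ref{runtime_sd_rls} (valid for all $\delta\ge 2$) rather than the tight asymptotic formula, which formally requires $\delta\ge 3$.
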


\begin{proof}
\new{Since $K/\delta<1/3$, $k = n^{K/\delta}$ is below $\frac{n^{1/3}}{\ln(n)}$ when $n$ is large enough. Hence this instance is in the standard regime as defined in Section~\ref{fixed}.}

We recall from Lemma~\ref{gb3} that, in this case, the runtime of the \oea is $(1+o(1))\binom{k}{\delta}^{-1}n^{\delta}(1-\frac{1}{n})^{\delta-n}$. Since $\delta$ is constant, this can be simplified to $O(k^{-\delta}n^{\delta})$. The runtime of SD-RLS$^*$ is $(1+o(1))\left[\ln(R)\sum_{i=1}^{\delta-1}\sum_{j=1}^i\binom{n}{j}+\binom{n}{\delta}\binom{k}{\delta}^{-1}\right]$, which is larger than the last term of the double sum, $\binom{n}{\delta-1}=\Omega(n^{\delta-1})$.
This yields 
\[
\frac{T_{\mathrm{SD-RLS}^*}(k,\delta,n)}{T_{\frac{1}{n}}(k,\delta,n)}=\Omega\left(\frac{n^{\delta-1}}{k^{-\delta}n^{\delta}}\right)=\Omega\left(n^{K-1}\right). \qedhere
\]
\end{proof}
\section{Experiments}
\label{experiments}
In this final section, we present experimental results that provide a concrete perspective on our study. Since all our theoretical results are asymptotic, a natural question is to what extent they can be observed on reasonable problem instances. To answer this question, we implemented the aforementioned algorithms and executed them on test instances. We also implemented and tested another algorithm, the SD-\oea introduced in $\cite{RajabiW20}$. This algorithm is similar to SD-RLS, but standard bit mutation with mutation rate $\frac{s}{n}$ is used instead of $s$-bits flips. 

We tested the algorithms on five different regimes for $k$ and $\delta$. For each regime, we only used values of $n$ that led to reasonable computation times. Each point on the following graphs is the average over 1000 runs. Note that, for better readability, the following graphs do not display variances. They were computed during the experiments, but they were close to the theoretical variances of the underlying geometric distributions of the jump phenomenon (this is coherent with previous experiments on the matter, e.g., \cite{RajabiW21evocop}).

To achieve reasonable computation times, for all experiments we used what we call \emph{partial simulation}. The algorithms are executed until the encounter of a local optimum. There, we sampled the number of iterations needed to jump, using the theoretical distributions determined in this work. For the \oea with mutation rate $p$, it is a geometric law with parameter $F(p)$. For the \ofea, it is a geometric law of parameter $C_{n/2}^{\beta}\sum_{i=0}^{n/2}i^{-\beta}F(i/n)$. For the SD-RLS$^*$, one step with strength $s\ge \delta$ is equivalent to sampling geometric law of parameter $\binom{n}{s}^{-1} \sum_{i=0}^{\lfloor (s-\delta)/2 \rfloor} \binom{k}{s-i} \binom{n-k}{i}$ (the step is failed if the sampled value is greater than $\ln(R)\binom{n}{s}$). A similar technique is used for the SD-\oea. The fitness level of arrival after the jump is also sampled using theoretical distributions. The algorithm is then executed again until the end of the run. Note that the theoretical distributions used are exact, hence our partial simulation approach generates an exact sample for the true runtime distribution. 

The five regimes we chose to display illustrate a progressive spectrum from the classical jump function to larger instances of generalized jump functions, where $\binom{k}{\delta}$ becomes considerable. On Figure~\ref{reg0}, the problem is the classical $\jump_{4}$, as we take $\delta=k=4$.
On Figure~\ref{reg1}, $\delta$ and $k$ are still constants, but $\delta<k$.
On Figure~\ref{reg2}, $k=2\ln(n)$, and $\delta=\frac{k}{2}$. Figure~\ref{reg3}, $k=n^{0.3}$, $\delta=\frac{k}{2}$, represents the limit of the standard regime.  Figure~\ref{reg4}, $k=\frac{n}{4}$, $\delta=\frac{k}{2}$, explores an instance outside of this regime.
\begin{figure}[!htb] 
\centering
\includegraphics[width=10cm]{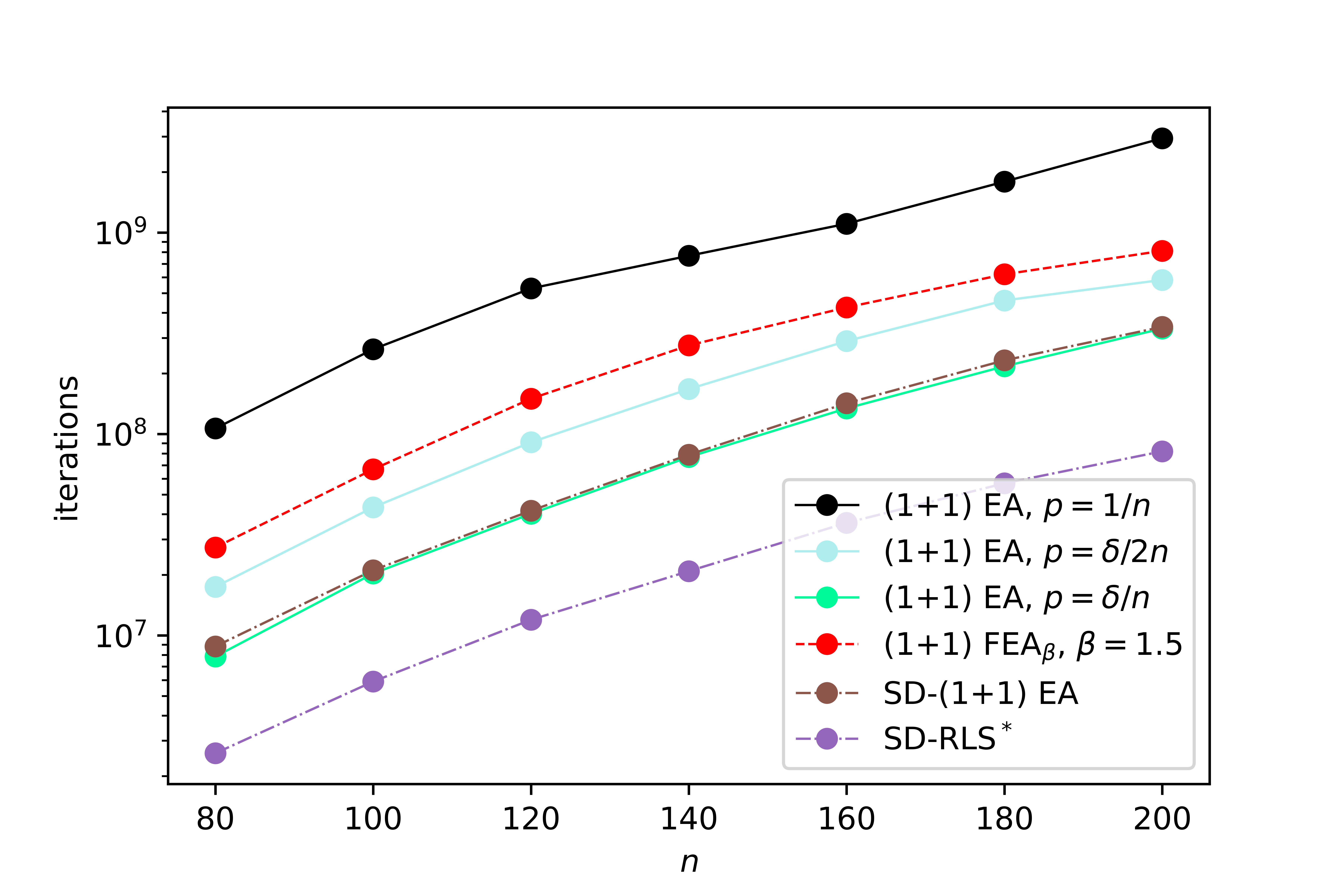}
\caption{Optimization times of different algorithms on the classic jump function $\jump_k = \jump_{k,\delta}$ with $k=\delta=4$.}
\label{reg0}
\end{figure}
\begin{figure}[!htb] 
\centering
\includegraphics[width=10cm]{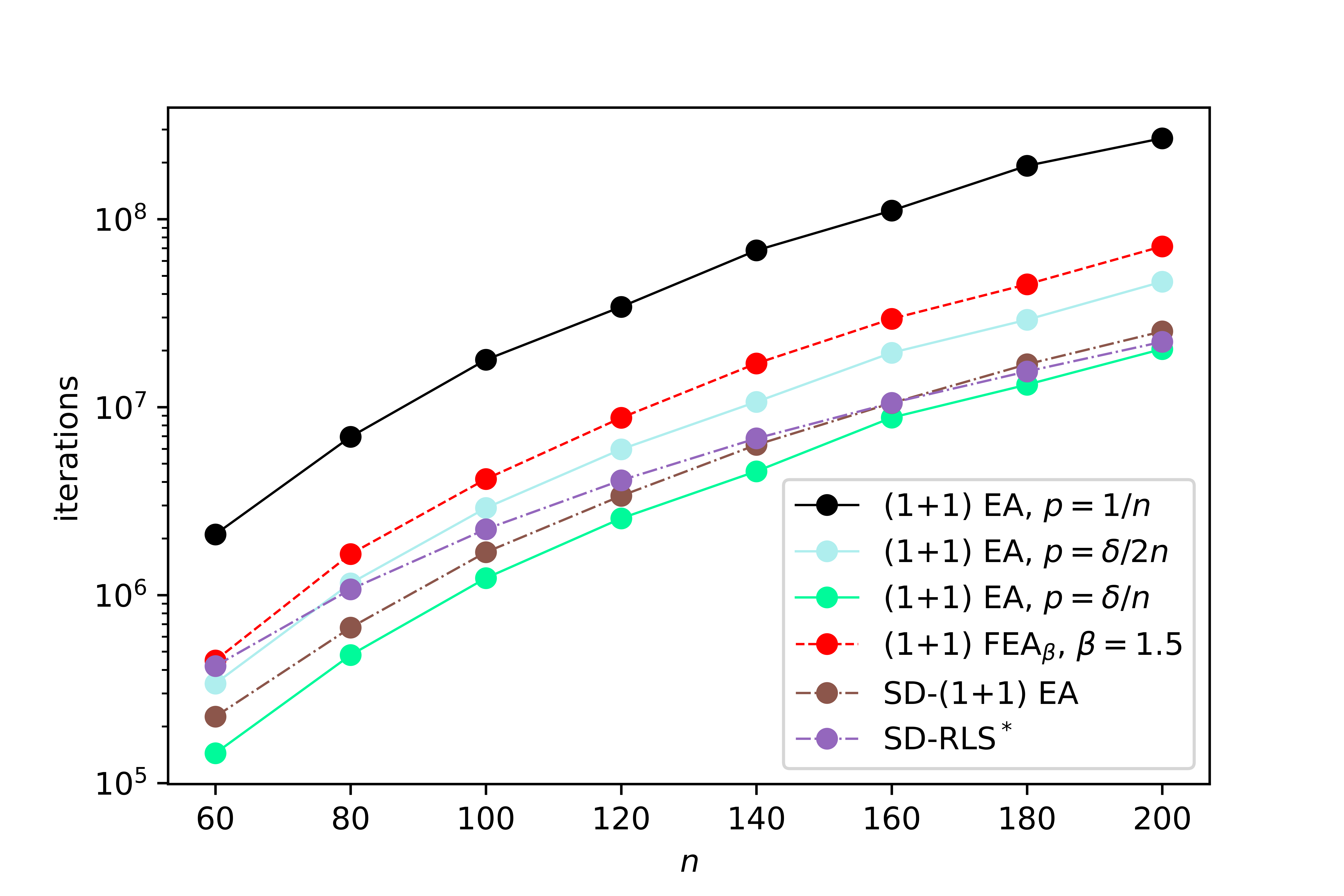}
\caption{Optimization times on $\jump_{k,\delta}$ with $k=6$, $\delta = 4$.}
\label{reg1}
\end{figure}
\begin{figure}[!htb] 
\centering
\includegraphics[width=10cm]{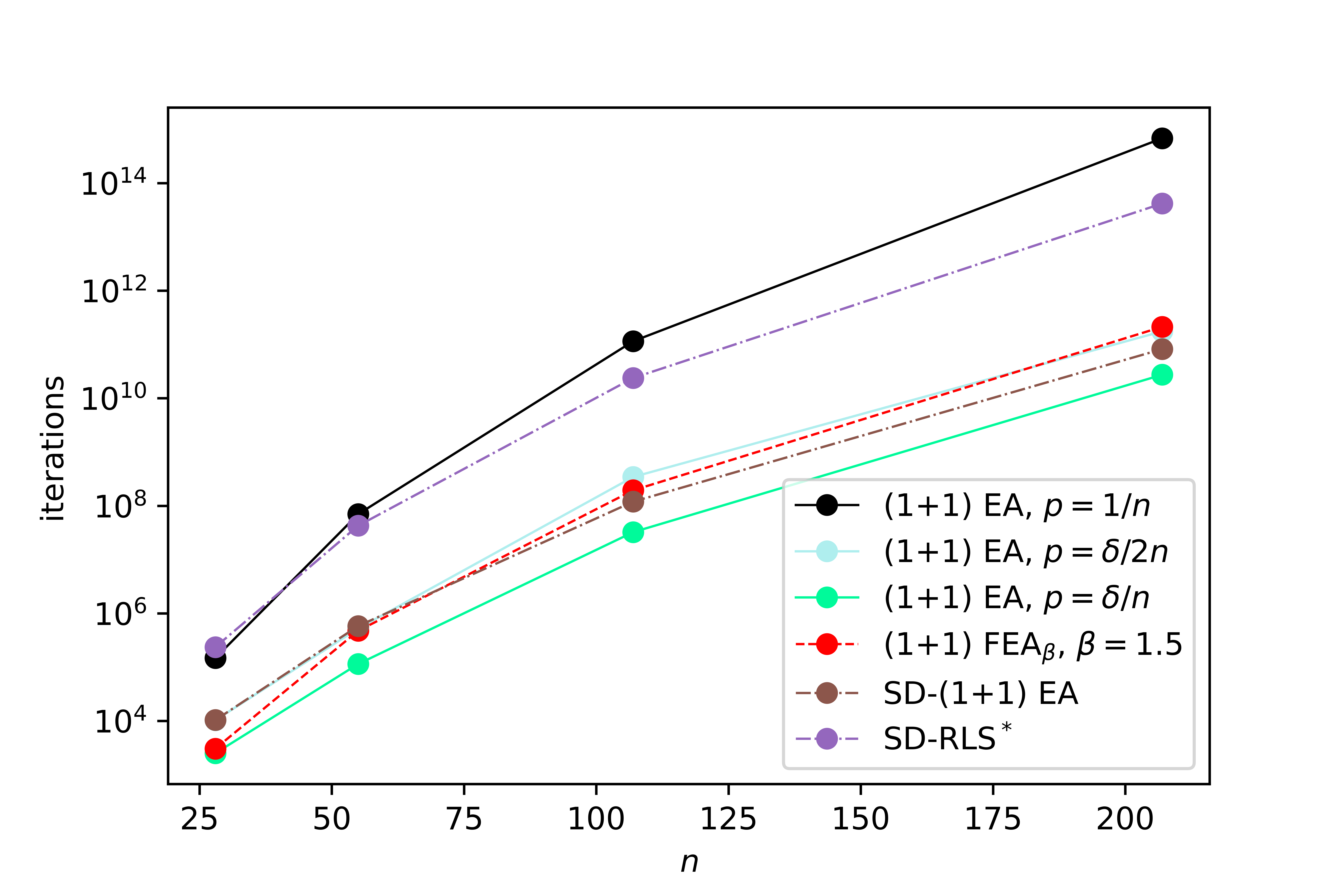}
\caption{Optimization times on $\jump_{k,\delta}$ with $k=3\ln(n)$, $\delta=\frac k2$.}
\label{reg2}
\end{figure}
\begin{figure}[!htb] 
\centering
\includegraphics[width=10cm]{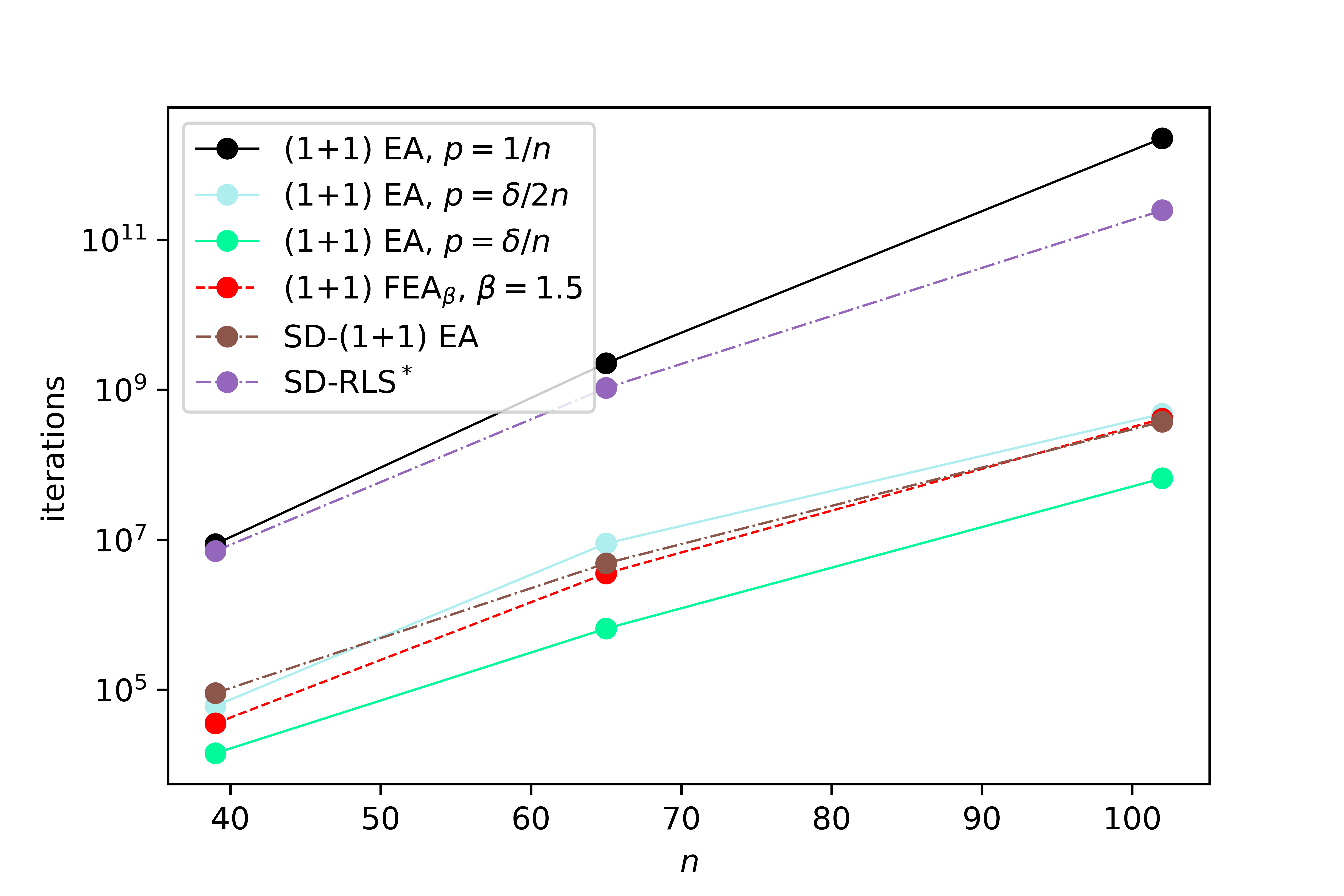}
\caption{Optimization times on $\jump_{k,\delta}$ with $k=4n^{0.3}$, $\delta=\frac k2$.}
\label{reg3}
\end{figure}
\begin{figure}[!htb] 
\centering
\includegraphics[width=10cm]{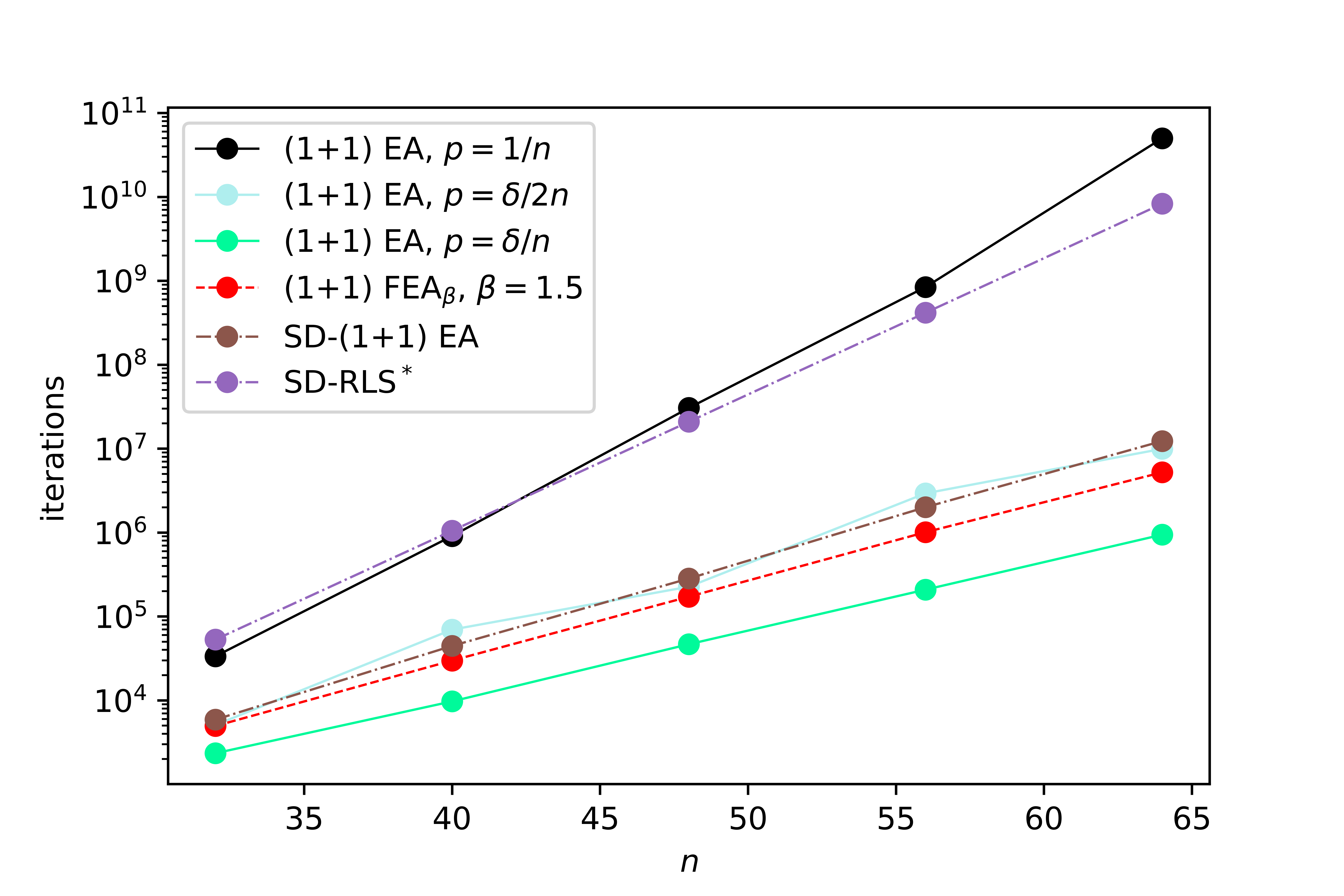}
\caption{Optimization times on $\jump_{k,\delta}$ with $k=\frac n4$, $\delta=\frac k2$.}
\label{reg4}
\end{figure}

Overall, these experimental results tend to confirm that our results, although asymptotic, are verified in simple instances of the problem as well.
The \oea with fixed mutation rate behaves as described by Theorem~\ref{optEA}. Out of the three mutation rates, $p=\frac 1n$ always gives the slowest runtime, $p=\frac{\delta}{n}$ has the fastest, and it is more efficient than the sub-optimal $p=\frac{\delta}{2n}$, by an exponential factor coherent with the theory. The SD-RLS$^*$ is the fastest on the classical $\jump_k$. It beats the \oea with optimal mutation rate $\delta/n$ by a factor of approximately $3$ (which is coherent with the theoretical factor, that is asymptotically $e$). However, Figure \ref{reg1} shows that it loses this advance as soon as $\delta<k$, even though it stays efficient. On the last three regimes, SD-RLS$^*$ is increasingly slow in comparison to the other algorithms, as it does not benefit from the speed-up induced by the increase of $\binom{k}{\delta}$. On the classical $\jump_k$, the SD-\oea is equivalent to the optimal \oea (which is coherent with the theoretical results of~\cite{RajabiW20}). It suffers on generalized jump functions, but not as consequently as the SD-RLS$^*$, as it stays of the same order as the \ofea. This is rather surprising: the change of mutation operator seems to induce a drastic change in behavior on $\jump_{k,\delta}$. We believe that studying the reasons for this phenomenon could be of great interest. Finally, the \ofea has a consistent behaviour. It is slower than the optimal \oea, but the runtime ratio remains stable throughout all the experiments, as expected theoretically.

\section{Conclusion}

In this work, we proposed a natural extension of the jump function benchmark class, which has a valley of low fitness in an arbitrary interval of Hamming distances from the global optimum. Our rigorous runtime analysis of different variants of the \oea on this function class showed that some previous results naturally extend, whereas others do not. The result that the fast \oea significantly outperforms the classic \oea on jump functions directly extends to our generalization (when \new{$k \le \frac{n^{1/3}}{\ln{n}}$}) as both runtimes simply improve by a factor of asymptotically $\binom{k}{\delta}$. The result that the \oea with stagnation detection and $k$-bit mutation (SD-RLS$^*$), the currently asymptotically fastest mutation-based algorithm for classic jump functions, beats the fast \oea by a moderate margin (a polynomial in~$k$), however, does not extend. Already in the standard regime, for any constant $K$ there are generalized jump functions such that the fast \oea beats the SD-RLS$^*$ by a factor of $\Omega(n^K)$. 

From this work, several open problems arise. Since for larger values of $k$ (and $\delta$ small) the \oea can (and, depending on the mutation rate, often will) jump over the valley not just to the first level above the valley, we could not determine precisely its asymptotic runtime outside the standard regime \new{$k \le \frac{n^{1/3}}{\ln{n}}$}. A better understanding of this regime would be highly desirable, among others, because here crossing a valley of low fitness is significantly easier, giving the problem a very different characteristic. Note that this difficulty cannot show up in the analysis of classic jump functions, simply because there is just a single solution above the valley of low fitness. 

In this first work on generalized jump functions, we regarded two topics of recent interest, the fast \oea and the random local search with stagnation detection. Jump functions have been very helpful also to understand other important topics, among them how crossover can be profitable or how probabilistic model building algorithms cope with local optima. Extending any such previous works to generalized jump functions, and with this confirming or questioning the insights made in these works, is clearly an interesting direction for future research.

\section*{Acknowledgments}
This work was supported by a public grant as part of the
Investissements d'avenir project, reference ANR-11-LABX-0056-LMH,
LabEx LMH.

\newcommand{\etalchar}[1]{$^{#1}$}

%
%
%

}
\end{document}